\newtheorem{theorem}{Theorem}
\newtheorem{lemma}{Lemma}
\icmltitlerunning{Mini-Batch Primal and Dual Methods for SVMs}
\newcommand{\setn}{\langle n \rangle}
\newcommand{\Exp}{\mathbb{E}}                      %
\DeclareMathOperator{\Rand}{Rand}
\DeclareMathOperator*{\argmax}{arg\,max}
\newcommand{\st}{\;:\;}                          %
\newcommand{\ve}[2]{\left\langle #1 ,  #2 \right\rangle}    %
\newcommand{\eqdef}{:=}
\newcommand{\R}{\mathbb{R}}                      %
\newcommand{\E}{\mathbb{E}}                      %
\newcommand{\calG}{\mathbf{G}}
\newcommand{\calH}{\mathbf{H}}
\newcommand{\x}{ {\bf x}}
\newcommand{\vv}{ {\bf v}}
\newcommand{\X}{ {\bf X}}
\newcommand{\Q}{ {\bf Q}}
\newcommand{\alf}{ {\boldsymbol \alpha}}
\newcommand{\vchi}{ {\boldsymbol \chi}}
\newcommand{\vdelta}{{\boldsymbol \delta} }
\newcommand{\w}{  {\bf w}}
\newcommand{\vsubset}[2]{#1_{[#2]}}
\newcommand{\vc}[2]{#1^{(#2)}}                   %
\newcommand{\cor}[2]{{{#1}_{#2}}}                   %
\newcommand{\corit}[3]{{#1}_{#2}^{(#3)}}                   %
\newcommand{\norm}[1]{\left\lVert{#1}\right\rVert}
\newcommand{\hingeloss}{\ell}
\newcommand{\hinge}[1]{\hingeloss ( #1 )}
\newcommand{\trans}{{\top}}
\newcommand{\bP}{\mathbf{P}}
\newcommand{\bD}{\mathbf{D}}
\newcommand{\bH}{\mathbf{H}}
\begin{document}

\twocolumn[
\icmltitle{Mini-Batch Primal and Dual Methods for SVMs}
\icmlauthor{Martin Tak\'{a}\v{c}}{martin.taki@gmail.com}
\icmladdress{University of Edinburgh,
            JCMB, King's Buildings, EH9 3JZ, Edinburgh, UK}
\icmlauthor{Avleen Bijral}{abijral@ttic.edu}
\icmladdress{Toyota Technological Institute at Chicago,
            6045 S. Kenwood Ave., Chicago, Illinois 60637, USA}
\icmlauthor{Peter Richt\'{a}rik}{peter.richtarik@ed.ac.uk}
\icmladdress{University of Edinburgh,
            JCMB, King's Buildings, EH9 3JZ, Edinburgh, UK}
\icmlauthor{Nathan Srebro}{nati@ttic.edu}
\icmladdress{Toyota Technological Institute at Chicago,
            6045 S. Kenwood Ave., Chicago, Illinois 60637, USA}

\icmlkeywords{mini-batching, support vector machine, stochastic gradient descent, stochastic dual coordinate ascent, Pegasos, parallel coordinate descent, spectral norm, iteration complexity}

\vskip 0.3in
]

\begin{abstract}
  We address the issue of using mini-batches in stochastic
  optimization of SVMs. We show that the same quantity, the
  \emph{spectral norm of the data}, controls the parallelization
  speedup obtained for both primal stochastic subgradient descent
  (SGD) and stochastic dual coordinate ascent (SCDA) methods and use
  it to derive novel variants of mini-batched SDCA. Our guarantees for
  both methods are expressed in terms of the original nonsmooth primal
  problem based on the hinge-loss.
\end{abstract}

\section{Introduction}

Stochastic optimization approaches have been shown to have significant
theoretical and empirical advantages in training linear Support Vector
Machines (SVMs), as well as in many other learning applications, and
are often the methods of choice in practice.  Such methods use a
single, randomly chosen, training example at each iteration.  In the
context of SVMs, approaches of this form include primal stochastic
gradient descent (SGD) methods (e.g., Pegasos, \citealt{pegasos}, NORMA,
\citealt{zhang}) and dual stochastic coordinate ascent
\citep{dcd}.

However, the inherent sequential nature of such approaches becomes a
problematic limitation for parallel and distributed computations  as the predictor must be updated after each training point is
processed, providing very little opportunity for parallelization.  A
popular remedy is to use {\em mini-batches}.  That is, to use several
training points at each iteration, instead of just one, calculating
the update based on each point separately and aggregating the
updates.  The question is then whether basing each iteration on
several points can indeed reduce the number of required iterations,
and thus yield parallelization speedups.

In this paper, we consider using mini-batches with Pegasos (SGD on the
primal objective) and with Stochastic Dual Coordinate Ascent (SDCA). We show that for \emph{both methods}, the quantity that controls the speedup obtained using mini-batching/parallelization is the {\em spectral norm of the data}.

In Section~\ref{sec:sgd} we provide the first analysis of mini-batched Pegasos (with the
original, non-smooth, SVM objective) that provably leads to parallelization speedups (Theorem~\ref{thm:pegasos}). The idea of using mini-batches with Pegasos is not new, and is
discussed already by \citet{pegasos}, albeit without a theoretical
justification. The original Pegasos theoretical analysis does not
benefit from using mini-batches---the same number of iterations is
required even when large mini-batches are used, there is no
speedup, and the serial runtime (overall number of operations, in this
case data accesses) increases linearly with the mini-batch size.
In fact, no parallelization speedup can be guaranteed based only on a bound on
the radius of the data, as in the original Pegasos analysis.  Instead,
we provide a refined analysis based on the spectral norm of the data.

We then move on to SDCA (Section~\ref{sec:SDCA}). We show the situation is more involved, and
a modification to the method is necessary. SDCA has been consistently
shown to outperform Pegasos in practice \cite{dcd,pegasos}, and is
also popular as it does not rely on setting a step-size as in
Pegasos.  It is thus interesting and useful to obtain mini-batch
variants of SDCA as well. We first show that a naive
mini-batching approach for SDCA can fail, in particular when the
mini-batch size is large relative to the spectral norm (Section
\ref{sec:SDCA_naive}).  We then present a ``safe'' variant of
mini-batched SDCA, which depends on the spectral norm, and an analysis
for this safe variant that establishes the same
spectral-norm-dependent parallelization speedups as for Pegasos (Section \ref{sec:safeMiniBatching}).  Similar to
a recent analysis of non-mini-batched SDCA by \citet{ShalevShawartzZhang}, we establish a
guarantee on the duality gap, and thus also on the sub-optimality of
the {\em primal } SVM objective, when using mini-batched SDCA (Theorem
\ref{thm:dualityGapForLipFunctions}).  We then go on to describe a more
aggressive, adaptive, method for mini-batched SDCA, which is based on
the analysis of the ``safe'' approach, and which we show often
outperforms it in practice (Section~\ref{sec:aggressiveMiniBatching},
with experiments in Section~\ref{sec:experiments}).

For simplicity of presentation we focus on the hinge loss, as in the
SVM objective.  However, all our results for both Pegasos and SDCA are
valid for any Lipschitz continuous loss function.

\paragraph{Related Work.} Several recent papers consider the use of mini-batches in stochastic
gradient descent, as well as stochastic dual averaging and stochastic
mirror descent, when minimizing a {\em smooth} loss function
\citep{dekel2012optimal,AgarwalDuchi,CotterEtAl}.  These papers establish
parallelization speedups for {\em smooth} loss minimization with
mini-batches, possibly with the aid of some ``acceleration''
techniques, and without relying on, or considering, the spectral norm
of the data.  However, these results do not apply to SVM training, where
the objective to be minimized is the non-smooth hinge loss.  In fact, the
only data assumption in these papers is an assumption on the radius of
the data, which is {\em not} enough for obtaining parallelization
guarantees when the loss is non-smooth.  Our contribution is thus
orthogonal to these papers, showing that it is possible to obtain
parallelization speedups even for non-smooth objectives, but only with
a dependence on the spectral norm.  We also analyze SDCA, which is a
substantially different method from the methods analyzed in these
papers.  It is interesting to note that a bound of the spectral norm
could perhaps indicate that it is easier to ``smooth'' the objective,
and thus allow obtaining results similar to ours (i.e.~on the
suboptimality of the original non-smooth objective) by smoothing the
objective and relying on mini-batched smooth SGD, where the spectral
norm might control how well the smoothed loss captures the original loss.
But we are not aware of any analysis of this nature, nor whether such
an analysis is possible.

There has been some recent work on
mini-batched coordinate descent methods for $\ell_1$-regularized
problems (and, more generally, regularizes by a separable convex function), similar to the SVM dual.  \citet{shotgun} presented and
analyzed SHOTGUN, a parallel coordinate descent method  for
$\ell_1$-regularized problems, showing linear speedups for mini-batch
sizes bounded in terms of the spectral norm of the data.  The analysis
does not directly apply to the SVM dual because of the box
constraints, but is similar in spirit.  Furthermore,
\citet{shotgun} do not discuss a ``safe'' variant which is
applicable for any mini-batch size, and only study the analogue of what
we refer to as ``naive'' mini-batching (Section~\ref{sec:SDCA_naive}).
More directly related is recent work of \citet{richtarik,richtarikBigData}
which provided a theoretical framework and analysis for a more general
setting than SHOTGUN, that includes also the SVM dual as a special
case.  However, guarantees in this framework, as well as those of
\citet{shotgun}, are only on the dual suboptimality (in our
terminology), and not on the more relevant primal suboptimality,
i.e., the suboptimality of the original SVM problem we are interested
in.  Our theoretical analysis builds on that of
\citet{richtarikBigData}, combined with recent ideas of
\citet{ShalevShawartzZhang} for ``standard'' (serial) SDCA, to obtain bounds on
the duality gap and primal suboptimality.

\section{Support Vector Machines}

We consider the optimization problem of training a
linear\footnote{Since both Pegasos and SDCA can be kernelized, all
  methods discussed are implementable also with kernels, and all our
  results hold.  However, the main advantage of SGD and SDCA is where
  the feature map is given explicitly, and so we focus our
  presentation on this setting.}  Support Vector Machine (SVM) based
on $n$ labeled training examples $\{(\x_i,y_i)\}_{i=1}^n$, where
$\x_i\in\R^d$ and $y_i\in\pm 1$.  We use $\X = [\x_1,\dots,\x_n]
\in\R^{d \times n}$ to denote the matrix of training examples.  We
assume the data is normalized such that $\max_i \norm{\x_i} \leq 1$,
and thus suppress the dependence on $\max_i \norm{\x_i}$ in all
results.  Training a SVM corresponds to finding a linear predictor
$\w\in \R^d$ with low $\ell_2$-norm $\|\w\|$ and small (empirical)
average hinge loss $\hat{L}(\w) \eqdef \frac{1}{n} \sum_{i=1}^n
\hinge{y_i \ve{\w}{\x_i}}$, where $\hinge{z}\eqdef[1-z]_+ =
\max\{0,1-z\}$.  This bi-objective problem can be serialized as
\begin{equation}
  \label{eq:regSVM}
  \min_{\w\in\R^d} \left[\bP(\w) \eqdef \tfrac{1}{n} \sum_{i=1}^n  \hinge{y_i \ve{\w}{\x_i}}+ \tfrac{\lambda}{2} \norm{\w}^2\right],
\end{equation}
where $\lambda>0$ is a regularization trade-off parameter.  It is also
useful to consider the dual of \eqref{eq:regSVM}:
\begin{equation}
  \label{eq:dualSVM}
  \max_{\alf\in \R^n,0 \leq \alf_i \leq 1} \left[\bD(\alf) \eqdef
  \tfrac{-1}{2\lambda n^2} \alf^\trans \Q
  \alf+\tfrac{1}{n}\sum_{i=1}^n \alf_i\right],
\end{equation}
where
\begin{equation}\label{eq:Qii}\Q \in \R^{n \times n}, \quad \Q_{i,j} = y_i y_j \ve{\x_i}{\x_j},\end{equation}
is the Gram matrix of the (labeled) data.  The (primal) optimum of
\eqref{eq:regSVM} is given by $\w^*=\frac{1}{\lambda n} \sum_{i=1}^n
\alf^*_i y_i \x_i$, where $\alf^*$ is the (dual) optimum of
\eqref{eq:dualSVM}.  It is thus natural to associate with each dual
solution $\alf$ a primal solution (i.e., a linear predictor) \begin{equation}\label{eq:walpha}\w(\alf)
\eqdef \tfrac{1}{\lambda n} \sum_{i=1}^n \alf_i y_i \x_i.\end{equation}

We will be discussing ``mini-batches'' of size $b$, represented by \emph{random subsets} $A
\subseteq \setn \eqdef \{1,2,\dots,n\}$ of examples, drawn uniformly at random from all subsets of $\setn$ of cardinality $b$. Whenever we draw such a subset, we will for simplicity write $A\in \Rand(b)$. For $A\in \Rand(b)$ we use
$\Q_A \in \R^{b \times b}$ to denote the random submatrix of $\Q$ corresponding to rows
and columns indexed by $A$, $\vv_A \in \R^b$ to denote a similar
restriction of a vector $\vv \in \R^n$, and $\vv_{[A]}\in\R^n$ for the
``censored'' vector where entries inside $A$ are as in $\vv$ and entries
outside $A$ are zero.  The average hinge loss on examples in
$A$ is denoted by \begin{equation}\label{eq:9s9js9skk}
\hat{L}_A(\w) \eqdef \tfrac{1}{b} \sum_{i
  \in A}\hinge{y_i \ve{\w}{\x_i}}.\end{equation}

\section{Mini-Batches in Primal Stochastic Gradient Descent Methods}
\label{sec:sgd}

\begin{algorithm}[!htb]
   \caption{Pegasos with Mini-Batches}
   \label{alg:pegasos}
\begin{algorithmic}
   \STATE {\bfseries Input:} $\{(\x_i,y_i)\}_{i=1}^n$, $\lambda> 0$, $b\in \langle n \rangle$, $T \geq 1$
   \STATE {\bfseries Initialize:} set $\vc{\w}{1} = {\bf 0} \in \R^d$
   \FOR{$t=1$ {\bfseries to} $T$}
   \STATE Choose random mini-batch $A_t \in \Rand(b)$
   \STATE $\eta_t = \tfrac{1}{\lambda t}$,\; $A_t^{+} = \{i \in A_t \st y_i\langle \vc{\w}{t},\x_i\rangle <1 \}$
   \STATE $\vc{\w}{t+1} = (1-\eta_t \lambda)\vc{\w}{t} + \tfrac{\eta_{t}}{b} \sum_{i \in A_t^{+} } y_i\x_i$
   \ENDFOR
   \STATE {\bfseries Output:} $\bar{\w}^{(T)} = \tfrac{2}{T}\sum_{t=\lfloor T/2 \rfloor +1}^T \vc{\w}{t}$
\end{algorithmic}
\end{algorithm}

Pegasos is an SGD approach to solving
\eqref{eq:regSVM}, where at each iteration the iterate $\vc{\w}{t}$ is
updated based on an unbiased estimator of a sub-gradient of the
objective $\bP(\w)$.  Whereas in a ``pure'' stochastic setting, the
sub-gradient is estimated based on only a {\em single} training example,
in our mini-batched variation (Algorithm~\ref{alg:pegasos}) at each iteration we consider the partial
objective:
\begin{equation}\label{eq:ft}
\bP_t(\w) \eqdef \hat{L}_{A_t}(\w) + \tfrac{\lambda}{2}\norm{\w}^2,\\
\end{equation}
where $A_t\in \Rand(b)$. We then calculate the
subgradient of the partial objective $\bP_t$ at  $\vc{\w}{t}$:
\begin{equation}
  \label{eq:nablat}
      \boldsymbol{\nabla}^{(t)} \eqdef \nabla \bP_t(\vc{\w}{t}) \overset{\eqref{eq:ft}}{=} \nabla
      \hat{L}_{A_t}(\vc{\w}{t}) + \lambda \vc{\w}{t},
\end{equation}
where
\begin{equation}\label{eq:9sjs8}\nabla \hat{L}_{A}(\w) \overset{\eqref{eq:9s9js9skk}}{=} - \tfrac{1}{b} \sum_{i\in A} \chi_i(\w) y_i \x_i\end{equation} and
$\chi_i(\w) \eqdef 1$ if $y_i
\ve{\w}{\x_i} < 1$ and $0$ otherwise (indicator for not classifying example $i$ correctly with a margin).  The next iterate is obtained by setting
$\vc{\w}{t+1} =  \vc{\w}{t} - \eta_t \vc{\boldsymbol{\nabla}}{t}$. We can now write
\begin{equation}
  \label{eq:pegasosUpdate}
\vc{\w}{t+1} \overset{\eqref{eq:nablat}+\eqref{eq:9sjs8}}{=} (1-\eta_t \lambda) \vc{\w}{t} +  \tfrac{\eta_t}{b} \sum_{i\in A_t} \chi_i(\vc{\w}{t}) y_i \x_i.
\end{equation}

Analysis of mini-batched Pegasos rests on bounding the norm of the
subgradient estimates $\boldsymbol{\nabla}^{(t)}$.  An unconditional bound on this
norm, used in the standard Pegasos analysis, follows from bounding
\[\|\nabla \hat{L}_A(\w)\|   \overset{\eqref{eq:9sjs8}}{\leq} \tfrac{1}{b} \sum_{i\in A} \|\chi_i(\w) y_i \x_i\|
\leq \tfrac{1}{b} \sum_{i\in A} 1 = 1.\]
From \eqref{eq:nablat} we then get
$\|\boldsymbol{\nabla}^{(t)}\| \leq \lambda \|\vc{\w}{t}\| + 1$; the
standard Pegasos analysis follows.  This bound relies only on the assumption
$\max_i \|\x_i\|\leq 1$, and is the tightest  bound without further
assumptions on the data.

The core novel observation here is that the expected (square) norm of $\nabla \hat{L}_A$ can be bounded in terms of (an upper bound on)
{\em the spectral norm of the data:}
\begin{equation}\label{eq:sigma_sq}\sigma^2 \geq
\tfrac{1}{n}\norm{\X}^2  =  \tfrac{1}{n} \|\sum_i \x_i \x_i^\trans\| \overset{\eqref{eq:Qii}}{=}
\tfrac{1}{n}\norm{\Q},\end{equation}
where $\norm{\cdot}$ denotes the spectral norm (largest
singular value) of a matrix.   In order to bound $\nabla \hat{L}_A$, we
first perform the following calculation, introducing the key
quantity $\beta_b$, useful also in the analysis of SDCA.
\begin{lemma}\label{lemma:vQv}
  For any $\vv \in \R^n$, $\tilde{\Q} \in \R^{n \times n}$, $A \in \Rand(b)$,
  \begin{align*}
\E[ \vv^{\trans}_{[A]} \tilde{\Q} \vv_{[A]} ] &=
  \tfrac{b}{n}[ (1-\tfrac{b-1}{n-1})\sum_{i=1}^n \tilde{\Q}_{ii} \vv_i^2 + \tfrac{b-1}{n-1}
  \vv^{\trans} \tilde{\Q} \vv].\\
\intertext{Moreover, if $\tilde{\Q}_{ii} \leq 1$ for all $i$ and $\tfrac{1}{n}\|\tilde{\Q}\| \leq \sigma^2$, then}
\E[ \vv^{\trans}_{[A]} \tilde{\Q} \vv_{[A]} ] &\leq \tfrac{b}{n} \beta_b \norm{\vv}^2, \text{ where}
\end{align*}
\begin{equation}
  \label{eq:betab}
  \beta_b \eqdef 1+\tfrac{(b-1)(n \sigma^2 -1)}{n-1}.
\end{equation}
\end{lemma}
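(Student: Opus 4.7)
The plan is to expand the quadratic form $\vv_{[A]}^\trans \tilde{\Q} \vv_{[A]}$ as a double sum indexed by the random set $A$, split it into diagonal and off-diagonal contributions, and then evaluate the expectation using only two probabilities associated with a uniformly drawn size-$b$ subset: $\Prob(i \in A) = b/n$ and, for $i \neq j$, $\Prob(i \in A,\, j \in A) = \binom{n-2}{b-2}/\binom{n}{b} = \tfrac{b(b-1)}{n(n-1)}$.

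Concretely, I would start from
\[
\vv_{[A]}^\trans \tilde{\Q} \vv_{[A]} \;=\; \sum_{i,j \in A} \tilde{\Q}_{ij}\vv_i \vv_j \;=\; \sum_{i \in A} \tilde{\Q}_{ii} \vv_i^2 \;+\; \sum_{\substack{i,j \in A \\ i \neq j}} \tilde{\Q}_{ij}\vv_i\vv_j,
\]
take expectations term-by-term using the two probabilities above, and then rewrite the off-diagonal sum via $\sum_{i \neq j} \tilde{\Q}_{ij}\vv_i\vv_j = \vv^\trans \tilde{\Q} \vv - \sum_i \tilde{\Q}_{ii} \vv_i^2$. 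After pulling out a common factor of $b/n$ and collecting the coefficient of $\sum_i \tilde{\Q}_{ii}\vv_i^2$, one obtains exactly the claimed identity with the factor $1 - \tfrac{b-1}{n-1}$ multiplying the diagonal piece and $\tfrac{b-1}{n-1}$ multiplying $\vv^\trans \tilde{\Q} \vv$.

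For the inequality, I would apply the two given hypotheses separately to the two terms. The diagonal bound $\tilde{\Q}_{ii} \leq 1$ (together with the nonnegativity of $\vv_i^2$) immediately yields $\sum_i \tilde{\Q}_{ii} \vv_i^2 \leq \|\vv\|^2$, and the spectral norm bound gives $\vv^\trans \tilde{\Q} \vv \leq \|\tilde{\Q}\|\,\|\vv\|^2 \leq n\sigma^2 \|\vv\|^2$. Substituting both into the identity and collecting terms in $\tfrac{b}{n}\|\vv\|^2$ produces the coefficient
\[
1 - \tfrac{b-1}{n-1} + \tfrac{b-1}{n-1}\, n\sigma^2 \;=\; 1 + \tfrac{(b-1)(n\sigma^2 - 1)}{n-1} \;=\; \beta_b,
\]
which is exactly \eqref{eq:betab}.

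There is no real obstacle here; the argument is essentially bookkeeping. The only step requiring any care is the combinatorial computation of $\Prob(i,j \in A)$ for $i \neq j$, and algebraically isolating the factor $b/n$ so that the coefficient of $\|\vv\|^2$ matches the definition of $\beta_b$ cleanly. Note that the inequality $\vv^\trans \tilde{\Q}\vv \leq \|\tilde{\Q}\|\,\|\vv\|^2$ is used here in the form relevant for the paper's applications (where $\tilde{\Q}$ will be positive semidefinite, e.g., $\Q$ or a related Gram-type matrix), so no sign issue arises.
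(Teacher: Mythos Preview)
Your proposal is correct and follows essentially the same approach as the paper: split the quadratic form into diagonal and off-diagonal parts, compute the expectation via the sampling probabilities $\Prob(i\in A)=b/n$ and $\Prob(i,j\in A)=b(b-1)/\bigl(n(n-1)\bigr)$, rewrite the off-diagonal contribution as $\vv^\trans\tilde{\Q}\vv-\sum_i\tilde{\Q}_{ii}\vv_i^2$, and then bound the two pieces using $\tilde{\Q}_{ii}\le 1$ and $\|\tilde{\Q}\|\le n\sigma^2$. The paper phrases the combinatorics as ``$b\,\E_i[\cdot]+b(b-1)\,\E_{i,j}[\cdot]$ with $i,j$ drawn uniformly without replacement,'' but this is just a restatement of your probability computation.
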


\begin{proof}
\begin{align*}
\lefteqn{\E[ \vv^{\trans}_{[A]} \tilde{\Q} \vv_{[A]} ] = \E[\sum_{i \in A}\vv_i^2 \tilde{\Q}_{ii} + \sum_{i,j\in A, i\ne j} \vv_i\vv_j \tilde{\Q}_{ij}]} \\
&\overset{(*)}{=} b \E_i [\vv_i^2\tilde{\Q}_{ii}] + b(b-1) \E_{i,j}[\vv_i\vv_j \tilde{\Q}_{ij}] \\
&= \tfrac{b}{n}\sum_i\tilde{\Q}_{ii}\vv_i^2 + \tfrac{b(b-1)}{n(n-1)}\vv^{\trans}(\tilde{\Q}-\text{diag}(\tilde{\Q}))\vv\\
&= \tfrac{b}{n}[(1-\tfrac{b-1}{n-1})\sum_i\tilde{\Q}_{ii}\vv_i^2 +
\tfrac{b-1}{n-1}\vv^{\trans}\tilde{\Q} \vv ],\\
\intertext{where in $(*)$ the expectations are over $i,j$ chosen uniformly at
random without replacement.  Now using $\tilde{\Q}_{ii} \leq 1$ and $\|\tilde{\Q}\|\leq n\sigma^2$, we can upper-bound the expectation as follows:}
&\leq \tfrac{b}{n}[(1-\tfrac{b-1}{n-1})\norm{\vv}^2 +
\tfrac{b-1}{n-1}n\sigma^2 \norm{\vv}^2] = \tfrac{b}{n}\beta_b
\norm{\vv}^2. \qedhere
\end{align*}
\end{proof}

We can now apply Lemma~\ref{lemma:vQv} to $\nabla \hat{L}_A$:
\begin{lemma}\label{lemma:nablaL}
  For any $\w\in \R^d$ and $A\in \Rand(b)$ we have
$\E [ \|\nabla \hat{L}_A(\w)\|^2 ] \leq
\frac{\beta_b}{b}$, where $\beta_b$ is as in Lemma~\ref{lemma:vQv}.
\end{lemma}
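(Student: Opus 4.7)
The plan is to reduce the claim to a direct application of Lemma~\ref{lemma:vQv}. The essential step is to rewrite the squared norm of $\nabla \hat{L}_A(\w)$ as a quadratic form in a censored vector, so that the second half of Lemma~\ref{lemma:vQv} applies. Concretely, I would introduce the indicator vector $\vchi \in \R^n$ with entries $\chi_i(\w) \in \{0,1\}$ and expand the squared norm as a double sum over $A$, collapsing $y_i y_j \ve{\x_i}{\x_j}$ to $\Q_{ij}$ via \eqref{eq:Qii}:
\[
\|\nabla \hat{L}_A(\w)\|^2 \overset{\eqref{eq:9sjs8}}{=} \tfrac{1}{b^2}\sum_{i,j \in A}\chi_i(\w)\chi_j(\w)\, y_i y_j \ve{\x_i}{\x_j} = \tfrac{1}{b^2}\, \vchi_{[A]}^\trans \Q\, \vchi_{[A]},
\]
where the last equality is just the definition of the $[A]$-censored vector.

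Next, I would verify the two hypotheses of the second half of Lemma~\ref{lemma:vQv} with $\tilde{\Q} = \Q$: the diagonal bound $\Q_{ii} = \|\x_i\|^2 \leq 1$ follows from the data normalization $\max_i \|\x_i\| \leq 1$, and $\tfrac{1}{n}\|\Q\| \leq \sigma^2$ is exactly \eqref{eq:sigma_sq}. Applying the lemma and using $\|\vchi\|^2 = \sum_i \chi_i(\w)^2 \leq n$ (since each $\chi_i(\w) \in \{0,1\}$), the expectation over $A \in \Rand(b)$ bounds as
\[
\E\bigl[\vchi_{[A]}^\trans \Q\, \vchi_{[A]}\bigr] \leq \tfrac{b}{n}\,\beta_b\, \|\vchi\|^2 \leq b\,\beta_b.
\]
Dividing by $b^2$ yields $\E[\|\nabla \hat{L}_A(\w)\|^2] \leq \beta_b/b$, as claimed.

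I do not foresee any real obstacle: the only thing that needs care is the deterministic bookkeeping that packages the sum $\sum_{i,j \in A}$ into the full-dimensional quadratic form $\vchi_{[A]}^\trans \Q\, \vchi_{[A]}$, which is exactly what the $[A]$-censored notation was set up for. Everything else is a direct invocation of Lemma~\ref{lemma:vQv}, together with the two trivial bounds $\chi_i(\w)^2 \leq 1$ and $\|\x_i\|^2 \leq 1$.
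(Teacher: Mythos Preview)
Your proposal is correct and follows essentially the same route as the paper's proof: introduce the indicator vector $\vchi$, rewrite $\|\nabla \hat{L}_A(\w)\|^2$ as $\tfrac{1}{b^2}\,\vchi_{[A]}^\trans \Q\,\vchi_{[A]}$, apply Lemma~\ref{lemma:vQv}, and finish with $\|\vchi\|^2 \leq n$. The only difference is that you spell out the verification of the hypotheses $\Q_{ii}\leq 1$ and $\tfrac{1}{n}\|\Q\|\leq\sigma^2$ explicitly, whereas the paper leaves this implicit.
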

\begin{proof}
If $\chi\in\R^n$ is the vector with entries $\chi_i(\w)$, then
  \begin{multline*}
\E [ \| \nabla \hat{L}_A(\w) \|^2 ] \overset{\eqref{eq:9sjs8}}{=} \E [\| \tfrac{1}{b} \sum_{i\in A} \boldsymbol{\chi}_i y_i \x_i\|^2 ]     \\
\overset{\eqref{eq:Qii}}{=} \tfrac{1}{b^2} \E[\boldsymbol{\chi}_{[A]}^{\trans} \Q \boldsymbol{\chi}_{[A]}]
\overset{\textrm{(Lem\ref{lemma:vQv})}}{\leq} \tfrac{1}{b^2} \tfrac{b}{n}\beta_b
\norm{\boldsymbol{\chi}}^2 \leq \tfrac{\beta_b}{b}. \qedhere
\end{multline*}
\end{proof}

Using the by-now standard analysis of SGD for strongly convex
functions, we obtain the main result of this section:

\begin{theorem}\label{thm:pegasos}
  After $T$ iterations of Pegasos with mini-batches (Algorithm 1), we have that for the averaged iterate
  $\bar{\w}^{(T)} = \frac{2}{T} \sum_{t=\lfloor T/2 \rfloor+1}^T \vc{\w}{t}$:
$$\E\left[ \bP(\bar{\w}^{(T)}) \right] - \inf_{\w\in \R^d} \bP(\w)\leq
\tfrac{\beta_b}{b} \cdot \tfrac{30}{\lambda T}.$$
\end{theorem}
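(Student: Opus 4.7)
The plan is to plug the new second-moment bound from Lemma~\ref{lemma:nablaL} into the by-now standard SGD analysis for $\lambda$-strongly convex functions with step size $\eta_t = 1/(\lambda t)$ and suffix averaging, following for example the Pegasos analysis of \citet{pegasos} or the averaging scheme of Shamir and Zhang.

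First I would verify the two structural ingredients needed by any strongly convex SGD template. The objective $\bP$ is $\lambda$-strongly convex thanks to the regularizer. Conditional on $\vc{\w}{t}$, the mini-batch subgradient $\boldsymbol{\nabla}^{(t)}=\nabla\hat L_{A_t}(\vc{\w}{t})+\lambda\vc{\w}{t}$ is an unbiased subgradient of $\bP$ at $\vc{\w}{t}$: since $A_t$ is drawn uniformly from $\Rand(b)$, each index $i\in\setn$ appears in $A_t$ with probability $b/n$, so $\E[\nabla\hat L_{A_t}(\w)]=\nabla\hat L(\w)$, and the regularization term is deterministic.

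Second, I would control $\E[\|\boldsymbol{\nabla}^{(t)}\|^2]$. This is where Lemma~\ref{lemma:nablaL} enters: conditional on $\vc{\w}{t}$ we get $\E[\|\nabla\hat L_{A_t}(\vc{\w}{t})\|^2]\le\beta_b/b$. Combined with the standard Pegasos-style induction that $\|\vc{\w}{t}\|\le 1/\sqrt{\lambda}$ under the choice $\eta_t=1/(\lambda t)$, and using $(a+c)^2\le 2a^2+2c^2$, this yields $\E[\|\boldsymbol{\nabla}^{(t)}\|^2]\le c_0\,\beta_b/b$ for a small absolute constant $c_0$ (the regularization contribution $\lambda^2\|\vc{\w}{t}\|^2\le\lambda$ is absorbed using $\beta_b/b\ge 1/n$ and the normalization $\max_i\|\x_i\|\le 1$, which forces $\sigma^2\ge 1/n$).

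Third, I would feed this $G^2\eqdef c_0\beta_b/b$ into the suffix-averaging SGD bound for $\lambda$-strongly convex functions, which states
\[
\E[\bP(\bar\w^{(T)})]-\inf_{\w}\bP(\w)\;\le\;\frac{C\,G^2}{\lambda T}
\]
for the tail average $\bar\w^{(T)}=\tfrac{2}{T}\sum_{t=\lfloor T/2\rfloor+1}^{T}\vc{\w}{t}$, with an absolute constant $C$. The numerical factor $30$ in the theorem is just $C\cdot c_0$ from this template.

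The only genuinely new step is the second one; everything else is routine. The main obstacle is therefore bookkeeping: checking that the $\lambda\vc{\w}{t}$ part of $\boldsymbol{\nabla}^{(t)}$ really does not introduce a term that competes with $\beta_b/(b\lambda T)$, and tracking the constants carefully enough to land at $30$. I do not expect any conceptual difficulty beyond that.
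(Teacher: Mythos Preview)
Your plan matches the paper's in spirit, but step two has a real gap in how you handle the regularization part $\lambda\vc{\w}{t}$ of $\boldsymbol{\nabla}^{(t)}$.

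First, Algorithm~\ref{alg:pegasos} has no projection step, so the ``standard Pegasos-style induction'' only yields the deterministic bound $\|\vc{\w}{t}\|\le 1/\lambda$ (from the unrolled formula and $\|\nabla\hat L_A\|\le 1$), not $1/\sqrt{\lambda}$. Second, and more importantly, even granting $\lambda^2\|\vc{\w}{t}\|^2\le\lambda$, your proposed absorption ``$\lambda\lesssim\beta_b/b$ via $\beta_b/b\ge 1/n$'' implicitly assumes $\lambda n=O(1)$, which the theorem does not impose. With your bound the SGD template would produce $\E[\|\boldsymbol{\nabla}^{(t)}\|^2]\le 2\lambda+2\beta_b/b$ and hence a final error with an additive $O(1/T)$ term that is \emph{not} of the form $\tfrac{\beta_b}{b}\cdot\tfrac{\text{const}}{\lambda T}$; this kills exactly the speedup the theorem is trying to exhibit (recall $\beta_b/b$ can be as small as $1/b$).

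The paper closes this gap by applying Lemma~\ref{lemma:nablaL} a second time, to the iterates themselves. Unrolling the update with $\eta_t=1/(\lambda t)$ gives
\[
\vc{\w}{t}=-\tfrac{1}{\lambda(t-1)}\sum_{\tau=1}^{t-1} g^{(\tau)},\qquad g^{(\tau)}=\nabla\hat L_{A_\tau}(\vc{\w}{\tau}),
\]
and then $\|\sum g^{(\tau)}\|^2\le(t-1)\sum\|g^{(\tau)}\|^2$ together with Lemma~\ref{lemma:nablaL} yields $\E[\|\vc{\w}{t}\|^2]\le\beta_b/(\lambda^2 b)$. This makes the regularization contribution exactly $\lambda^2\E[\|\vc{\w}{t}\|^2]\le\beta_b/b$, so $\E[\|\boldsymbol{\nabla}^{(t)}\|^2]\le 4\beta_b/b$ with no stray $\lambda$-dependent term, after which the suffix-averaging bound (Theorem~5 of \citealt{MakingSGDOptimal} with $G^2=4\beta_b/b$) gives the stated constant~$30$.
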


\begin{proof}
 Unrolling \eqref{eq:pegasosUpdate} with $\eta_t=1/(\lambda t)$ yields
  \begin{equation}\label{eq:hs8js8s}
  \vc{\w}{t} = -\tfrac{1}{\lambda (t-1)}\sum_{\tau=1}^{t-1} g^{(\tau)},
  \end{equation}
where $g^{(\tau)}\eqdef \nabla
  \hat{L}_{A_\tau}(\vc{\w}{\tau})$. Using  the inequality
$\|\sum_{\tau=1}^{t-1}  g^{(\tau)}\|^2  \leq (t-1) \sum_{\tau=1}^{t-1} \|g^{(\tau)}\|^2$, we now get
\begin{equation}\label{eq:js8sjs800}
\E [ \|\vc{\w}{t}\|^2 ] \overset{\eqref{eq:hs8js8s}}{\leq}
\sum_{\tau=1}^{t-1}\tfrac{\E [\|
g^{(\tau)}\|^2 ]}{\lambda^2(t-1)} \overset{\text{(Lem2)}}{\leq} \tfrac{\beta_b}{\lambda^2 b},
\end{equation}
\begin{equation*}
\E [\|\boldsymbol{\nabla}^{(t)}\|^2] \overset{\eqref{eq:nablat}+(\text{Lem}2)}{\leq} 2(\lambda^2 \E[ \|\vc{\w}{t}\|^2  ] + \tfrac{\beta_b}{b}) \overset{\eqref{eq:js8sjs800}}{\leq} 4\tfrac{\beta_b}{b}.
\end{equation*}
The performance guarantee is now given by the analysis of SGD with tail
averaging (Theorem~5 of \citealt{MakingSGDOptimal}, with $\alpha=\tfrac{1}{2}$ and $G^2=4\tfrac{\beta_b}{b}$).
\end{proof}

\textbf{Parallelization speedup.} When $b=1$ we have  $\beta_b=1$ (see \eqref{eq:betab}) and Theorem~\ref{thm:pegasos}
agrees with the standard (serial) Pegasos analysis\footnote{Except that we
  avoid the logarithmic factor by relying on tail averaging and a more
  modern SGD analysis.} \cite{pegasos}.  For larger
mini-batches, the guarantee depends on the quantity $\beta_b$, which
in turn depends on the spectral norm $\sigma^2$.  Since $\tfrac{1}{n} \leq
\sigma^2 \leq 1$, we have $1 \leq \beta_b \leq b$.

The worst-case situation is at a degenerate extreme, when all data points lie on
a single line, and so $\sigma^2=1$ and $\beta_b = b$.  In this case
Lemma~\ref{lemma:nablaL} degenerates to the worst-case bound of $\E[
\|\nabla \hat{L}_A(\mathbf{\w})\|^2] \leq 1$, and in Theorem~\ref{thm:pegasos} we have $\tfrac{\beta_b}{b}=1$, indicating that using larger
mini-batches does not help at all, and the same number of iteration
(i.e., the same parallel runtime, and $b$ times as much serial runtime)
is required.

However, when $\sigma^2<1$, and so $\beta_b < 1$, we see a benefit in
using mini-batches in Theorem~\ref{thm:pegasos}, corresponding to a
parallelization speedup of $\tfrac{b}{\beta_b}$.  The best situation is when $\sigma^2=\tfrac{1}{n}$,
and so $\beta_b=1$, which happens when all training points are
orthogonal.  In this case there is never any interaction between
points in the mini-batch, and using a mini-batch of size $b$ is just
as effective as making $b$ single-example steps.  When $\beta_b=1$ we
indeed see that the speedup speedup is equal to the number of mini-batches, and that the behavior in terms
of the number of data accesses (equivalently, serial runtime) $bT$,
does not depend on $b$; that is, even with larger mini-batches, we
require no more data accesses, and we gain linearly from being able to
perform the accesses in parallel.  The case $\sigma^2=\tfrac{1}{n}$ is rather
extreme, but even for intermediate values $\tfrac{1}{n} < \sigma^2 < 1$
we get speedup.  In particular, as long as $b \leq
\tfrac{1}{\sigma^2}$, we have $\beta_b \leq 2$, and an essentially linear
speedup.  Roughly speaking, $\tfrac{1}{\sigma^2}$ captures the number of
examples in the mini-batch beyond which we start getting significant
interactions between points.

\section{Mini-Batches in Dual Stochastic Coordinate Ascent Methods}\label{sec:SDCA}

An alternative stochastic method to Pegasos is Stochastic Dual
Coordinate Ascent (SDCA, \citealt{dcd}), aimed to solve the  dual
problem \eqref{eq:dualSVM}. At each iteration we choose a single
training example $(\x_i,y_i)$, uniformly at random, corresponding to a single
dual variable (coordinate) $\alf_i = e_i^\trans \alf$. Subsequently, $\alf_i$ is updated so
as to maximize the (dual) objective,  keeping all other
coordinates of $\alf$ unchanged and  maintaining the box
constraints.  At iteration $t$, the update $\vdelta_i^{(t)}$ to $\alf_i^{(t)}$ is computed via
\begin{eqnarray}
  \vdelta^{(t)}_i & \eqdef & \argmax_{0 \leq \alf^{(t)}_i + \delta \leq 1} \bD(\alf^{(t)}+ \delta e_i) \nonumber \\
&\overset{\eqref{eq:dualSVM}}{=}& \argmax_{0 \leq \alf^{(t)}_i + \delta \leq 1} (\lambda n- (\Q e_i)^\trans \alf^{(t)}) \delta - \tfrac{\Q_{i,i}}{2} \delta^2 \nonumber\\
&= & \textrm{clip}_{[-\alf^{(t)}_i,1-\alf^{(t)}_i]}\tfrac{\lambda n - (\Q e_i)^\trans \alf^{(t)}}{\Q_{i,i}} \nonumber \\
&\overset{\eqref{eq:Qii},\eqref{eq:walpha}}{=}& \textrm{clip}_{[-\alf^{(t)}_i,1-\alf^{(t)}_i]}\tfrac{\lambda
  n(1-y_i \ve{\w(\alf^{(t)})}{\x_i})}{\|\x_i\|^2}, \label{eq:alphai}
\end{eqnarray}
where $\textrm{clip}_{I}$ is  projection onto the interval $I$. Variables $\alf_j^{(t)}$ for $j\neq i$ are unchanged.
 Hence, a single iteration has the form
$\alf^{(t+1)} = \alf^{(t)} + \vdelta_i^{(t)}e_i$.
Similar to a Pegasos update, at each iteration a single, random,
training point is considered, the ``response'' $y_i
\ve{\w(\alf^{(t)})}{\x_i}$ is calculated (this operation dominates the
computational effort), and based on the response, a multiple of $\x_i$
is added to the weight vector $\w$ (corresponding to changing
$\alf_i$).  The two methods thus involve fairly similar operations
at each iteration, with essentially identical computational costs.
They differ in that in Pegasos, $\alf_i$ is changed according to some
pre-determined step-size, while SDCA changes it optimally so as to
maximize the dual objective (and maintain dual feasibility); there
is no step-size parameter.

SDCA was suggested and studied empirically by \citet{dcd}, where
empirical advantages over Pegasos were often observed.  In terms of a
theoretical analysis, by considering the dual problem
\eqref{eq:dualSVM} as an $\ell_1$-regularized, box-constrained
quadratic problem, it is possible to obtain guarantees on the {\em
  dual} suboptimality, $\bD(\alf^*) - \bD(\alf^{(t)})$, after a
finite number of SDCA iterations \cite{TewariShalevShwartzJMLR2011,NesterovRCDM,richtarik}. However, such
guarantees do {\em not} directly imply guarantees on the {\em primal}
suboptimality of $\w(\alf^{(t)})$.  Recently,
\citet{ShalevShawartzZhang} bridged this gap, and provided guarantees
on $\bP(\w(\alf^{(t)}))-\bP(\w^*)$ after a finite number of SDCA
iterations.  These guarantees serve as the starting point for our
theoretical study.

\subsection{Naive Mini-Batching}
\label{sec:SDCA_naive}
A naive approach to parallelizing SDCA using mini-batches is to
compute $\vdelta_i^{(t)}$ in parallel, according to \eqref{eq:alphai},
for all $i \in A_t$, all based on the current iterate $\alf^{(t)}$,
and then update $\alf^{(t+1)}_i=\alf^{(t)}_i+ \vdelta^{(t)}_i$ for
$i \in A_t$, and keep $\alf^{(t+1)}_j=\alf^{(t)}_j$ for $j\not\in A_t$.
However, not only might this approach not reduce the number of
required iterations, it might actually {\em increase} the number of
required iterations. This is because the dual objective need  {\em not improve
  monotonically} (as it does for ``pure'' SDCA), and even not
converge.

To see this, consider an extreme situation with only two identical
training examples: $\Q =
\begin{bsmallmatrix}1&1\\1&1\end{bsmallmatrix}$, $\lambda=\tfrac{1}{n}=\tfrac{1}{2}$ and
mini-batch size $b=2$ (i.e., in each iteration we use both
examples). If we start with $\vc{\alf}{0}={\bf 0}$
with $\bD(\vc{\alf}{0})=0$
then $\vc{\vdelta_1}{0}=\vc{\vdelta_2}{0}=1$ and
following the naive approach we have $\vc{\alf}{1}=(1,1)^T$
with objective value $\bD(\vc{\alf}{1})=0$.
In the next iteration $\vc{\vdelta_1}{1}=\vc{\vdelta_2}{1}=-1$
which brings us back to $\vc{\alf}{2}={\bf 0}$.
So the algorithm will alternate between those two solutions
with objective value $\bD(\alf)=0$, while at the optimum
$\bD(\alf^*)=\bD((0.5,0.5)^\trans)=0.25$.

This is of course a simplistic toy example, but the same phenomenon
will occur when a large number of training examples are identical or
highly correlated.  This can  also be observed empirically in some
of our experiments discussed later, e.g., in Figure~\ref{fig:failureOfNaiveApproach}.

The problem here is that since we update each $\alf_i$ independently to
its optimal value {\em as if all other coordinates were fixed}, we are
ignoring interactions between the updates.  As we see in the
extreme example above, two different $i,j\in A_t$, might suggest
essentially the same change to $\w(\alf^{(t)})$, but we would then perform
this update {\em twice}, overshooting and yielding a new iterate
which is actually worse then the previous one.

\subsection{Safe Mini-Batching}
\label{sec:safeMiniBatching}

Properly accounting for the interactions between coordinates in the
mini-batch would require jointly optimizing over all $\alf_i$, $i\in
A_t$.  This would be a very powerful update and no-doubt reduce the
number of required iterations, but would require solving a
box-constrained quadratic program, with a quadratic term of the form
$\vdelta_A^{\trans} \Q_A \vdelta_A$, $\vdelta_A \in \R^b$, at each iteration.  This
quadratic program cannot be distributed to different machines, each
handling only a single data point.

Instead, we propose a ``safe'' variant, where the term $\vdelta_A^{\trans} \Q_A
\vdelta_A$ is approximately bounded by the separable surrogate
$\beta \norm{\vdelta_A}^2$, for some $\beta>0$ which we will discuss
later.  That is, the update is given by:
\begin{eqnarray}
  \vdelta^{(t)}_i &\eqdef & \argmax_{0 \leq \alf^{(t)}_i + \delta \leq
    1} (\lambda n - (\Q e_i)^\trans \alf^{(t)}) \delta - \tfrac{\beta}{2}
  \delta^2  \nonumber \\
&=& \textrm{clip}_{[-\alf^{(t)}_i,1-\alf^{(t)}_i]}\tfrac{\lambda
  n(1-y_i \ve{w(\alf^{(t)})}{\x_i})}{\beta},  \label{eq:MBalphai}
\end{eqnarray}
with $\alf^{(t+1)}_i=\alf^{(t)}_i+ \vdelta^{(t)}_i$ for $i \in
A_t$, and $\alf^{(t+1)}_j=\alf^{(t)}_j$ for $j\not\in A_t$.  In
essence, $\tfrac{1}{\beta}$ serves as a step-size, where we are now careful not
to take steps so big that they will accumulate together and overshoot
the objective.  If handling only a single point at each iteration,
such a short-step approach is not necessary, we do not need a
step-size, and we can take a ``full step'', setting $\alf_i$
optimally ($\beta=1$).  But with the potential for interaction between
coordinates updated in parallel, we must use a smaller step, depending
on the potential for such interactions.

We will first rely on the bound \eqref{eq:sigma_sq}, and establish that
the choice $\beta=\beta_b$ as in \eqref{eq:betab} provides for a safe
step size. To do so, we consider the dual objective at $\alf+\vdelta$,
\begin{equation}\label{eq:Ddelta}
  \bD(\alf+\vdelta) = -\tfrac{(\alf^\trans \Q
  \alf + 2 \alf^\trans \Q \vdelta + \vdelta^\trans \Q \vdelta)}{2\lambda n^2}  +\sum_{i=1}^n  \tfrac{\alf_i+\vdelta_i}{n},
\end{equation}
and the following separable approximation to it:
\begin{equation}  \label{eq:H}
  \calH(\vdelta,\alf) \eqdef -\tfrac{(\alf^\trans \Q
  \alf + 2 \alf^\trans \Q \vdelta + \beta_b \norm{\vdelta}^2)}{2\lambda n^2}  + \sum_{i=1}^n \tfrac{\alf_i+\vdelta_i}{n},
\end{equation}
in which $\beta_b \norm{\vdelta}^2$ replaces $\vdelta^\trans \Q \vdelta$.
Our update \eqref{eq:MBalphai} with $\beta=\beta_b$ can be written as
$\displaystyle\vdelta = \arg\max_{\vdelta: 0 \leq \alf+\vdelta \leq 1} \calH(\vdelta,\alf)$
(we then use the coordinates $\vdelta_i$ for $i\in A$ and ignore the
rest). We are essentially performing parallel coordinate ascent on the
separable approximation $\calH(\vdelta,\alf)$ instead of on
$\bD(\alf+\vdelta)$.  To understand this approximation, we note that
$\bH({\bf 0},\alf)=\bD(\alf)$, and show that $\calH(\vdelta,\alf)$ provides an
approximate expected lower bound on $\bD(\alf+\vdelta)$:
\begin{lemma}\label{lemma:H}
  For any $\alf,\vdelta\in \R^n$ and $A \in \Rand(b)$,
$$\E_A[ \bD(\alf+\vdelta_{[A]}) ] \geq (1-\tfrac{b}{n})\bD(\alf) + \tfrac{b}{n}\calH(\vdelta,\alf).$$
\end{lemma}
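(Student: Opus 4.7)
The plan is a direct computation, expanding $\bD(\alf+\vdelta_{[A]})$ via its definition \eqref{eq:Ddelta} (with $\vdelta$ replaced by the censored vector $\vdelta_{[A]}$), taking the expectation over $A\in\Rand(b)$ term by term, and then algebraically matching the result against $(1-\tfrac{b}{n})\bD(\alf)+\tfrac{b}{n}\calH(\vdelta,\alf)$.

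First I would write
\begin{equation*}
\bD(\alf+\vdelta_{[A]}) = -\tfrac{\alf^\trans \Q\alf + 2\alf^\trans \Q \vdelta_{[A]} + \vdelta_{[A]}^\trans \Q \vdelta_{[A]}}{2\lambda n^2} + \sum_{i=1}^n\tfrac{\alf_i+(\vdelta_{[A]})_i}{n}.
\end{equation*}
The term $\alf^\trans \Q \alf$ and the sum $\sum_i \alf_i/n$ are independent of $A$. For the linear-in-$\vdelta_{[A]}$ terms I would use that $\Prob[i\in A]=b/n$, giving $\E[(\vdelta_{[A]})_i]=(b/n)\vdelta_i$, hence $\E[\alf^\trans \Q\vdelta_{[A]}]=(b/n)\alf^\trans \Q\vdelta$ and $\E[\sum_i(\vdelta_{[A]})_i/n]=(b/n)\sum_i \vdelta_i/n$.

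The nontrivial term is the quadratic $\E[\vdelta_{[A]}^\trans \Q \vdelta_{[A]}]$. Here I would invoke Lemma~\ref{lemma:vQv} with $\tilde \Q = \Q$ and $\vv = \vdelta$, noting that the hypotheses are met: $\Q_{ii}=y_i^2\|\x_i\|^2\leq 1$ by the normalization assumption, and $\tfrac{1}{n}\|\Q\|\leq \sigma^2$ by the definition \eqref{eq:sigma_sq} of $\sigma^2$. This yields $\E[\vdelta_{[A]}^\trans \Q \vdelta_{[A]}]\leq \tfrac{b}{n}\beta_b \|\vdelta\|^2$. Because this quadratic appears with a \emph{negative} sign inside $\bD$, the upper bound on $\E[\vdelta_{[A]}^\trans \Q \vdelta_{[A]}]$ translates into a \emph{lower} bound on $\E[\bD(\alf+\vdelta_{[A]})]$, which is precisely the direction of the inequality we need.

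Finally, I would split $\alf^\trans \Q \alf = (1-\tfrac{b}{n})\alf^\trans \Q\alf + \tfrac{b}{n}\alf^\trans \Q\alf$ and $\sum_i \alf_i/n=(1-\tfrac{b}{n})\sum_i \alf_i/n+\tfrac{b}{n}\sum_i \alf_i/n$ so that the pieces not involving $\vdelta$ assemble into $(1-\tfrac{b}{n})\bD(\alf)$, while the remaining $\tfrac{b}{n}$-scaled pieces are exactly the four terms of $\calH(\vdelta,\alf)$ as defined in \eqref{eq:H}. Matching terms then gives the claim. There is no real obstacle here: the lemma is essentially a bookkeeping statement whose only nonobvious ingredient is Lemma~\ref{lemma:vQv}, and the design of $\calH$ (with $\beta_b\|\vdelta\|^2$ in place of $\vdelta^\trans \Q\vdelta$) is tailored precisely so that this bookkeeping closes.
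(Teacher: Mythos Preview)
Your proposal is correct and follows essentially the same approach as the paper: handle the $\vdelta$-independent and linear-in-$\vdelta$ terms via $\E[\vdelta_{[A]}]=\tfrac{b}{n}\vdelta$, and bound the quadratic term using Lemma~\ref{lemma:vQv}, with the negation flipping the inequality in the right direction. The only difference is that you spell out the $(1-\tfrac{b}{n})/\tfrac{b}{n}$ splitting of the constant terms explicitly, whereas the paper just asserts that the non-$\vdelta$ terms match on both sides.
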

\begin{proof}
  Examining \eqref{eq:Ddelta} and \eqref{eq:H}, the terms that do not
  depend on $\vdelta$ are equal on both sides.  For the linear term in
  $\vdelta$, we have that $\E[\vdelta_{[A]}] = \frac{b}{n} \vdelta$, and
  again we have equality on both sides.  For the quadratic term we use
  Lemma \ref{lemma:vQv} which yields $\E[ \vdelta_{[A]}^\trans \Q
  \vdelta_{[A]} ]\leq \frac{b}{n}\beta_b\norm{\vdelta}^2$, and after
  negation establishes the desired bound.
\end{proof}

Inequalities of this general type are also studied in
\cite{richtarikBigData} (see Sections~3 and 4).  Based on
the above lemma, we can modify the analysis of
\citet{ShalevShawartzZhang} to obtain (see complete proof in the
appendix):

\begin{theorem}
\label{thm:dualityGapForLipFunctions}
Consider the SDCA updates given by \eqref{eq:MBalphai}, with $A_t \in \Rand(b)$, starting from $\alf^{(0)}={\bf 0}$ and with
$\beta = \beta_b$ (given in eq.~\eqref{eq:betab}).
For any $\epsilon>0$ and
\begin{eqnarray}\label{eq:dualityRequirements}
t_0 &\geq&   \max\{ 0,\lceil \tfrac nb \log(\tfrac{2\lambda n}{\beta_b}) \rceil\},
  \\
T_0 &\geq & t_0+
\tfrac{\beta_b}b
\left[\tfrac{4}{ \lambda \epsilon} -2\tfrac n{\beta_b} \right]_+
,\\
T &\geq & T_0 + \max\{\lceil \tfrac nb\rceil,\tfrac{\beta_b}{b}
\tfrac{ 1}{\lambda \epsilon}\},\\
 \bar{\alf} &\eqdef& \tfrac{1}{T-T_0} \sum_{t=T_0}^{T-1} \alf^{(t)},\label{eq:averageDefinition}
\end{eqnarray}
we have $$ \E[ \bP(\w(\bar{\alf})) ] - \bP(\w^*) \leq \E[ \bP(\w(\bar{\alf})) -
\bD(\bar{\alf}) ] \leq \epsilon.$$
\end{theorem}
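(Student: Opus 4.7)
The plan is to adapt the template of \citet{ShalevShawartzZhang} to the mini-batched setting, with Lemma~\ref{lemma:H} playing the role of their per-coordinate progress bound for serial SDCA. Throughout, abbreviate the dual suboptimality $D_t \eqdef \bD(\alf^*) - \bD(\alf^{(t)})$ and the duality gap $G_t \eqdef \bP(\w(\alf^{(t)})) - \bD(\alf^{(t)})$.

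\emph{Per-iteration progress.} Because $\vdelta^{(t)}$ defined in \eqref{eq:MBalphai} maximizes $\calH(\vdelta,\alf^{(t)})$ subject to the box constraints $0 \leq \alf^{(t)}+\vdelta \leq 1$, Lemma~\ref{lemma:H} yields, for every feasible $\vdelta$,
\begin{equation*}
\E_t[\bD(\alf^{(t+1)})] - \bD(\alf^{(t)}) \;\geq\; \tfrac{b}{n}\bigl[\calH(\vdelta,\alf^{(t)}) - \bD(\alf^{(t)})\bigr].
\end{equation*}
I would then plug in $\vdelta = s(\alf^* - \alf^{(t)})$ for $s\in[0,1]$ (feasible since $\alf^*,\alf^{(t)}\in[0,1]^n$). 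Expanding $\calH$ using \eqref{eq:H}, invoking concavity of $\bD$, and applying the Fenchel-dual argument for the hinge loss of \citet{ShalevShawartzZhang} to lower-bound the linear-in-$\vdelta$ term by $s G_t$ rather than the weaker $s D_t$, this produces the key recursion
\begin{equation*}
\E_t[D_t - D_{t+1}] \;\geq\; \tfrac{bs}{n}\,G_t \;-\; \tfrac{s^2 b\,\beta_b}{2\lambda n^3}\,\|\alf^*-\alf^{(t)}\|^2.
\end{equation*}
Bounding $\|\alf^*-\alf^{(t)}\|^2 \leq n$ via the box constraints turns this into a clean one-step inequality.

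\emph{Two-phase bound on $D_t$.} Using $G_t \geq D_t$ on the right-hand side yields $\E[D_{t+1}] \leq (1 - sb/n) D_t + s^2 b\beta_b/(2\lambda n^2)$. With $s=1$ this is a strict contraction as long as $D_t > \beta_b/(\lambda n)$, so iterating drives $\E[D_{t_0}] \leq \beta_b/(\lambda n)$ after the number of steps $t_0$ stated in \eqref{eq:dualityRequirements}. For $t \geq t_0$ I would switch to the unconstrained minimizer $s^{\star}=\min\{1,\,\lambda n D_t/\beta_b\}$; a standard induction showing that $1/\E[D_t]$ grows at least linearly in $t-t_0$ produces a bound of order $\beta_b/(b\lambda(T_0-t_0))$ on $\E[D_{T_0}]$, matching the requirement on $T_0$.

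\emph{Duality gap for $\bar\alf$.} Keeping the stronger form involving $G_t$ and summing the rearranged inequality $\tfrac{sb}{n}G_t \leq \E_t[D_t - D_{t+1}] + \tfrac{s^2 b\beta_b}{2\lambda n^2}$ for $t=T_0,\ldots,T-1$ with the fixed step $s=\min\{1,\lambda n\epsilon/\beta_b\}$, one obtains
\begin{equation*}
\tfrac{1}{T-T_0}\sum_{t=T_0}^{T-1}\E[G_t] \;\leq\; \tfrac{n\,\E[D_{T_0}]}{sb(T-T_0)} + \tfrac{s\,\beta_b}{2\lambda n}.
\end{equation*}
Since $\alf\mapsto \bP(\w(\alf))-\bD(\alf)$ is convex ($\w(\alf)$ is affine in $\alf$, $\bP$ convex, $-\bD$ convex), Jensen's inequality gives $\E[\bP(\w(\bar\alf))-\bD(\bar\alf)] \leq (T-T_0)^{-1}\sum_t \E[G_t]$. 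Checking that each of the two right-hand-side terms drops below $\epsilon/2$ under \eqref{eq:dualityRequirements} finishes the duality-gap bound; weak duality $\bD(\bar\alf) \leq \bP(\w^*)$ then supplies the primal suboptimality statement.

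\emph{Main obstacle.} The crux is the step that strengthens the generic $\calH$-improvement from one involving merely $D_t$ to one involving the full primal-dual gap $G_t$. This is the Shalev-Shwartz--Zhang device for turning a dual-only analysis into a primal-dual analysis for Lipschitz losses, and it rests on both the box-feasibility of $s(\alf^*-\alf^{(t)})$ and the Fenchel-conjugate identity for the hinge loss. Porting it from their serial surrogate to our separable surrogate $\calH$ only alters the quadratic coefficient from $1$ to $\beta_b$, but propagating the factor $\beta_b/b$ consistently through both phases and calibrating $t_0, T_0, T$ to match the precise constants in \eqref{eq:dualityRequirements} is where the bulk of the algebraic bookkeeping lies.
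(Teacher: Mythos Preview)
Your overall architecture matches the paper's: invoke Lemma~\ref{lemma:H}, plug a comparison direction into $\calH$, extract a one-step inequality featuring the duality gap $G_t$, then run the two-phase Shalev-Shwartz--Zhang schedule and finish by averaging and Jensen. The flaw is the comparison direction itself.

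You set $\vdelta = s(\alf^* - \alf^{(t)})$ and then assert that the Fenchel argument upgrades the linear term from $sD_t$ to $sG_t$. It cannot. The Fenchel equality $\ell_i^*(-u) + \ell_i(z) + uz = 0$ holds precisely when $-u\in\partial\ell_i(z)$. To make the \emph{current} primal loss $\ell_i(\langle\w(\alf^{(t)}),\x_i\rangle)$ appear---which is what produces $G_t$---you must take $u$ to be a subgradient at the current prediction. With $u=\alf_i^*$ the identity (where it holds) ties you to $\ell_i(\langle\w(\alf^*),\x_i\rangle)$, and after simplification you recover only $sD_t$; since $D_t\leq G_t$, this is the weaker inequality, not the stronger one, and the final averaging step then bounds only dual suboptimality, not the duality gap.

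The paper's fix---and the actual Shalev-Shwartz--Zhang device---is to use $\vdelta = s(\vchi^{(t)}-\alf^{(t)})$ with $-\vchi_i^{(t)}\in\partial\ell_i(\langle\w(\alf^{(t)}),\x_i\rangle)$. For the hinge loss $\vchi_i^{(t)}\in\{0,1\}$, so this direction is box-feasible, and the Fenchel equality yields exactly $G_t$. The quadratic residual becomes $\tfrac{1}{n}\sum_i\|\x_i\|^2(\vchi_i^{(t)}-\alf_i^{(t)})^2\leq 1$, matching your $\|\alf^*-\alf^{(t)}\|^2\leq n$ in order of magnitude, and with this single correction the rest of your outline (phase one with $s=1$, phase-two induction, summation with a fixed $s$ and Jensen for $\bar\alf$) goes through essentially as written and coincides with the paper's proof.
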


The number of iterations of mini-batched SDCA, sufficient to
reach \emph{primal} suboptimality $\epsilon$, is by Theorem~\ref{thm:dualityGapForLipFunctions} equal to
\begin{equation}
  \label{eq:sdcaspeedup}
\tilde{O}\left( \tfrac{n}{b} + \tfrac{\beta_b}{b} \cdot \tfrac{1}{\lambda
  \epsilon}\right).
\end{equation}
We observe the \emph{same speedup} as in the case of mini-batched Pegasos: factor of
$\tfrac{b}{\beta_b}$, with an essentially linear speedup when $b \leq
\tfrac{1}{\sigma^2}$.  It is interesting to note that the quantity $\beta_b$
only affects the second, $\epsilon$-dependent, term in
\eqref{eq:sdcaspeedup}.  The ``fixed cost'' term, which essentially
requires a full pass over the data, is {\em not} affected by
$\beta_b$, and is {\em always} scaled down by $b$.

\subsection{Aggressive Mini-Batching}
\label{sec:aggressiveMiniBatching}

Using  $\beta = \beta_\sigma$ is safe, but might be too
safe/conservative.
In particular, we used the spectral norm to bound
$\vdelta^\trans \Q \vdelta \leq \norm{\Q}\norm{\vdelta}^2$ in
Lemma~\ref{lemma:H} (through Lemma \ref{lemma:vQv}), but this is a worst
case bound over all possible vectors, and might be loose for the
relevant vectors $\vdelta$.  Relying on a worst-case bound might mean
we are taking much smaller steps then we could be.
Furthermore, the approach we presented thus far relies on knowing the
spectral norm of the data, or at least a bound on the spectral norm
(recall \eqref{eq:sigma_sq}), in order to set the step-size.  Although
it is possible to estimate this quantity by sampling, this can
certainly be inconvenient.

Instead, we suggest a more aggressive variant of mini-batched SDCA
which gradually adapts $\beta$ based on the actual values of
$\|\vdelta_{[A_t]}^{(t)}\|^2$ and $\vdelta^{(t)}_{[A_t]} \Q
\vdelta^{(t)}_{[A_t]}$.
\begin{algorithm}[]
   \caption{SDCA with Mini-Batches (aggressive)}
   \label{alg:aggressive}
\begin{algorithmic}
   \STATE {\bfseries Input:} $\{(\x_i,y_i)\}_{i=1}^n$, $\lambda > 0$, $b\in \R^d$, $T\geq 1$, $\gamma = 0.95$
   \STATE {\bfseries Initialize:} set $\vc{\alf}{0} = {\bf 0}$,
   $\vc{\w}{0}={\bf 0}$,
   $\vc{\beta}{0} = \beta_b$

   \FOR{$t=0$ {\bfseries to} $T$}
   \STATE Choose $A_t \in \Rand(b)$
   \STATE For $i\in A_t$, compute $\cor{\tilde{\vdelta}}{i}$ from
   \eqref{eq:MBalphai} using $\beta=\vc{\beta}{t}$
   \STATE Sum $\zeta :=\sum_{i \in A_t}
   \tilde\vdelta^2_i$ and $\tilde{\Delta}:=\sum_{i \in A_t} \tilde{\vdelta}_i y_i \x_i$
   \STATE Compute $\rho=
   \textrm{clip}_{[1,\beta_b]}\left(\frac{\norm{\tilde{\Delta}}^2}{\zeta}\right)$
   \STATE For $i\in A_t$, compute $\cor{\vdelta}{i}$ from
   \eqref{eq:MBalphai} using $\beta=\rho$.
   \STATE $\vc{\beta}{t+1} :=
     (\vc{\beta}{t})^{\gamma} \rho^{1-\gamma}$
   \IF{$\bD(\vc{\alf}{t}+{\vdelta}_{[A_t]})>\bD(\vc{\alf}{t})$}
     \STATE $\vc{\alf}{t+1}=\vc{\alf}{t}+\vdelta_{[A_t]}$,
     \STATE $\vc{\w}{t+1}=\vc{\w}{t}+\frac{1}{\lambda n}\sum_{i \in A_t} \vdelta_i y_i \x_i$
   \ELSE
     \STATE $\vc{\alf}{t+1}=\vc{\alf}{t}$, $\vc{\w}{t+1}=\vc{\w}{t}$
   \ENDIF
   \ENDFOR
\end{algorithmic}
\end{algorithm}
In Section~\ref{sec:experiments} one can observe advantages of this
aggressive strategy.

In this variant, at each iteration we calculate the ratio
$\tilde{\vdelta}_{[A]}^\trans\Q   \tilde{\vdelta}_{[A]}^\trans  / \|\tilde{\vdelta}_{[A]}\|^2 $, and nudge the step
size towards it by updating it to a weighted geometric average of the
previous step size and ``optimal'' step size based on the step $\vdelta$
considered.  One complication is that due to the box constraints, not
only the magnitude but also the direction of the step $\vdelta$
depends on the step-size $\beta$, leading to a circular situation.
The approach we take is as follows: we maintain a ``current step
size'' $\beta$.  At each iteration, we first calculate a tentative
step $\tilde{\vdelta}_A$, according to \eqref{eq:MBalphai}, with the
current $\beta$.  We then calculate $\rho =
\frac{\tilde{\vdelta}_{[A]}^\trans\Q
  \tilde{\vdelta}_{[A]}^\trans}{\norm{\tilde{\vdelta}_{[A]}}^2}$,
according to this step direction, and update $\beta$ to
$\beta^{\gamma}\rho^{1-\gamma}$ for some pre-determined parameter
$0<\gamma<1$ that controls how quickly the step-size adapts.  But,
instead of using $\tilde{\vdelta}$ calculated with the previous
$\beta$, we actually re-compute $\vdelta_A$ using the step-size
$\rho$.  We note that this means the ratio $\rho$ does {\em not}
correspond to the step $\vdelta_A$ actually taken, but rather to the
tentative step $\tilde{\vdelta}_A$.  We could potentially continue
iteratively updating $\rho$ according to $\vdelta_A$ and $\vdelta_A$
according to $\rho$, but we found that this does not improve
performance significantly and is generally not worth the extra
computational effort.  This aggressive strategy is summarized in
Algorithm \ref{alg:aggressive}.  Note that we initialize
$\beta=\beta_b$, and also constrain $\beta$ to remain in the range
$[1,\beta_b]$, but we can use a very crude upper bound $\sigma^2$ for
calculating $\beta_b$.  Also, in our aggressive strategy, we refuse
steps that do not actually increase the dual objective,
corresponding to overly aggressive step sizes.

Carrying out the aggressive strategy requires computing
$\tilde{\vdelta}_{[A]}^\trans\Q \tilde{\vdelta}_{[A]}$ and the dual
objective efficiently and in parallel.  The main observation here is
that:
  \begin{equation}
    \label{eq:deltaQdelta}
    \tilde{\vdelta}_{[A]}^\trans\Q \tilde{\vdelta}_{[A]} = \norm{
      \sum_{i\in A} \tilde{\vdelta}_i y_i \mathbf{\x_i}}^2
  \end{equation}
  and so the main operation to be performed is an aggregation of
  $\sum_{i\in A} \tilde{\vdelta}_i y_i \x_i$, similar to the operation
  required in mini-batched Pegasos.  As for the dual objective, it can
  be written as $\bD(\alf) = -\norm{\w(\alf)}^2 -
  \frac{1}{n}\norm{\alf}_1$ and can thus be readily calculated if we
  maintain $\w(\alf)$, its norm, and
  $\norm{\alf}_1$.

\section{Experiments} \label{sec:experiments}

\begin{figure*}[ht!]
\centering
\subfigure{\includegraphics[width=1.5in]{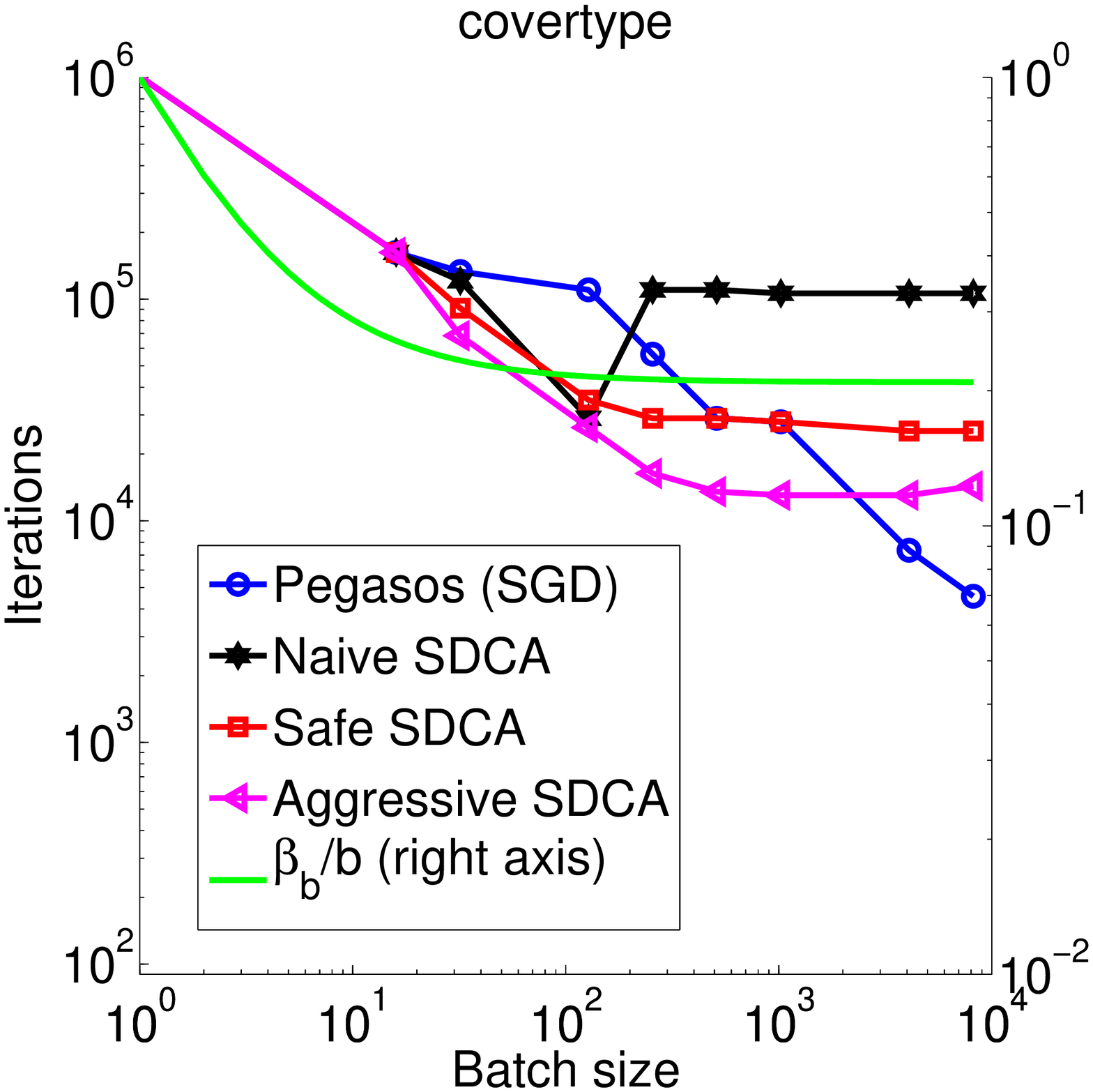}}
\subfigure{\includegraphics[width=1.5in]{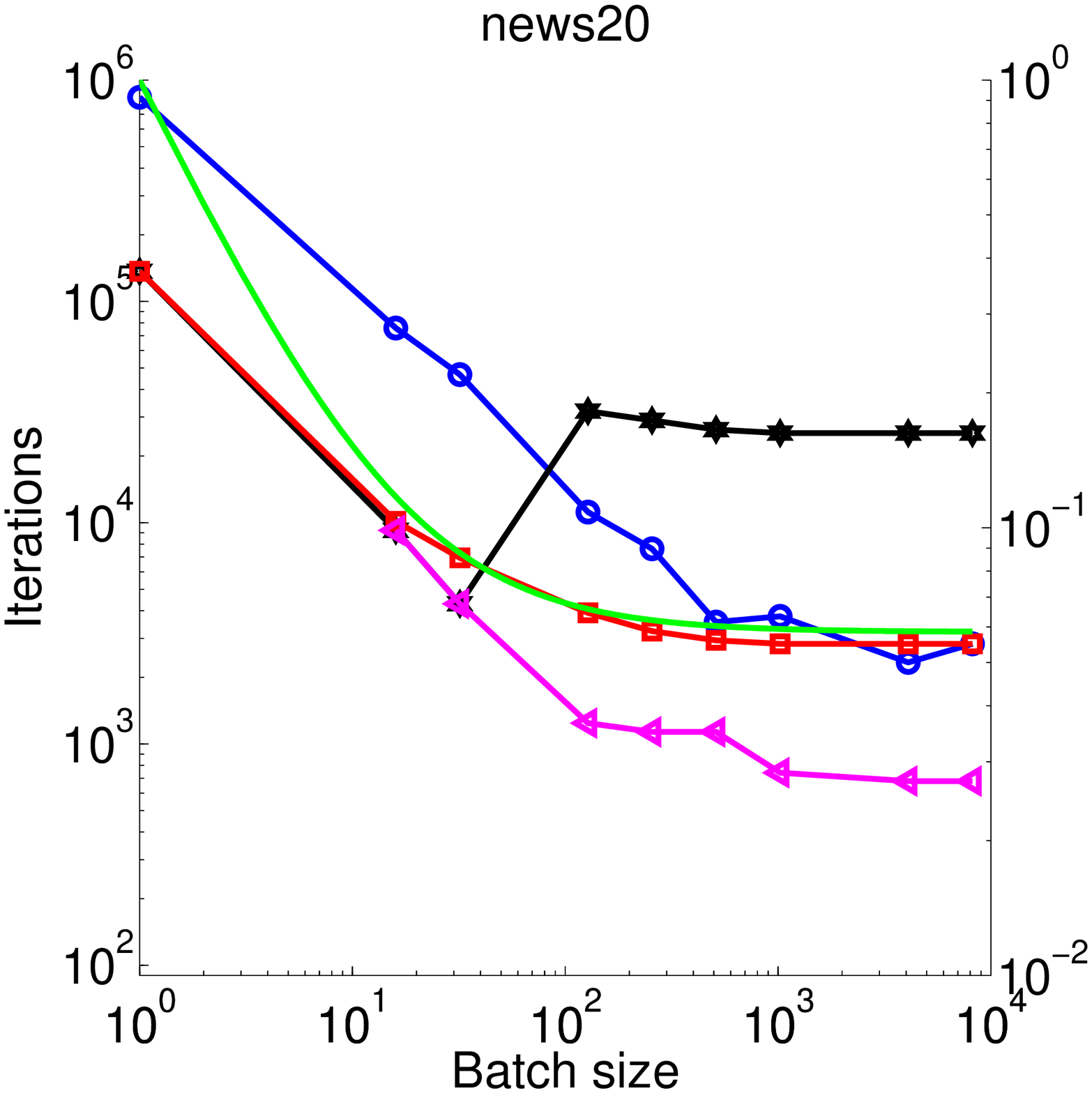}}
\subfigure{\includegraphics[width=1.5in]{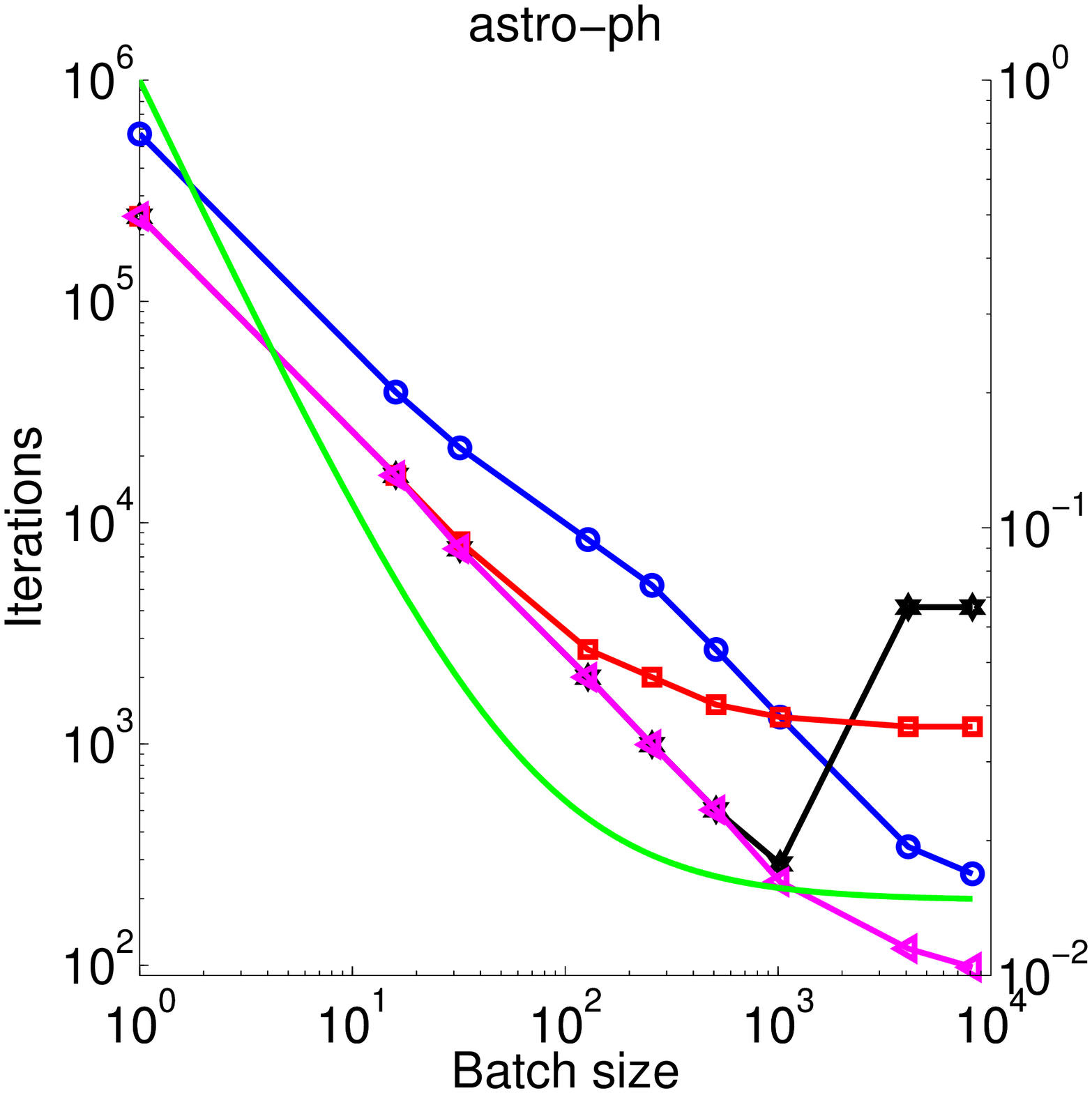}}
\subfigure{\includegraphics[width=1.5in]{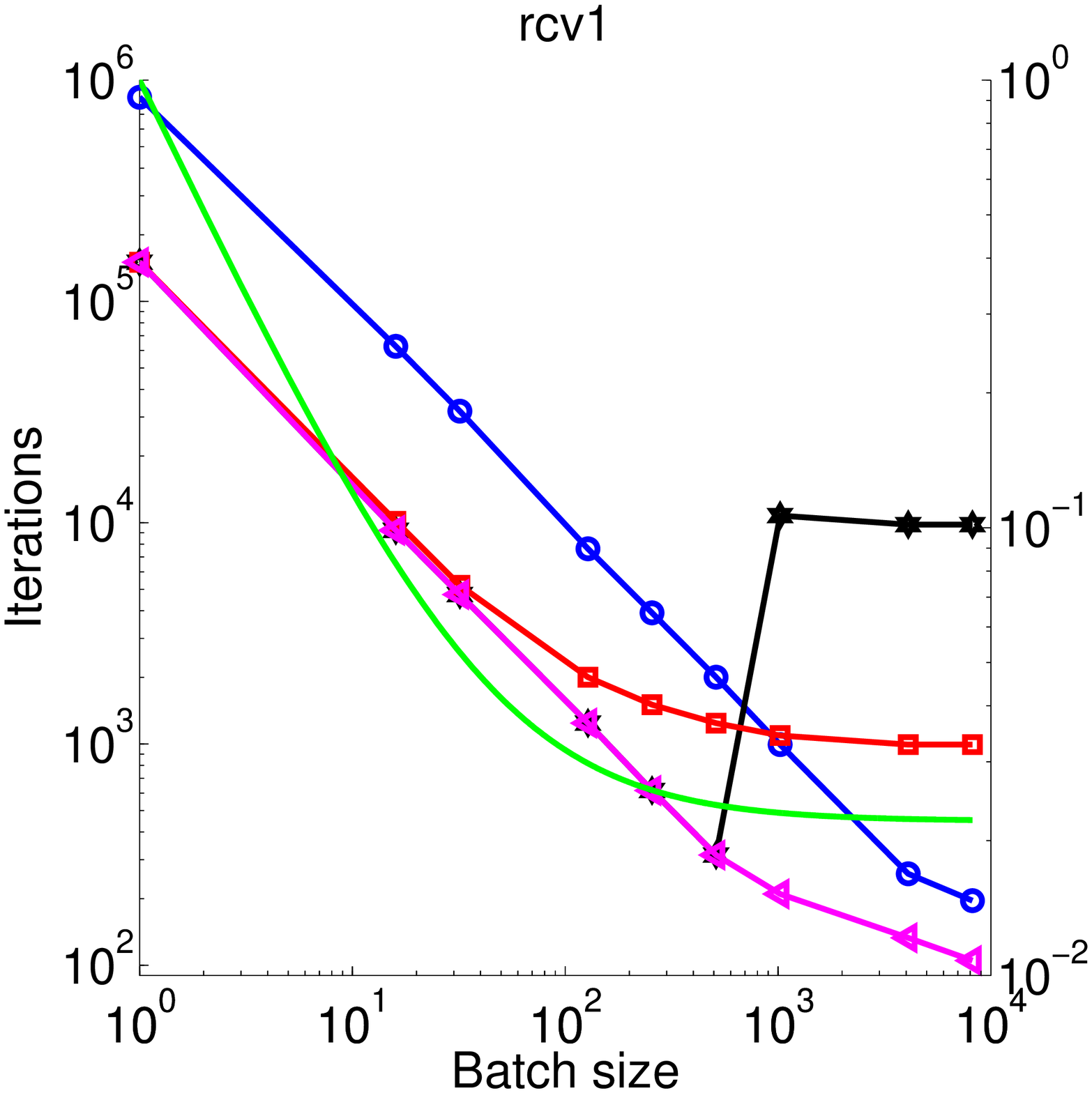}}
\vspace{-0.2in}
\small
 \caption{\small Number of iterations (left vertical axis) needed to find a $0.001$-accurate \emph{primal} solution for different mini-batch sizes $b$ (horizontal axis). The leading factor in our analysis, $\beta_b/b$, is plotted on the right vertical axis.}
 \label{fig:itertions2epsilon}
\end{figure*}

\begin{figure*}[ht!]
\centering
\subfigure{\includegraphics[width=1.5in]{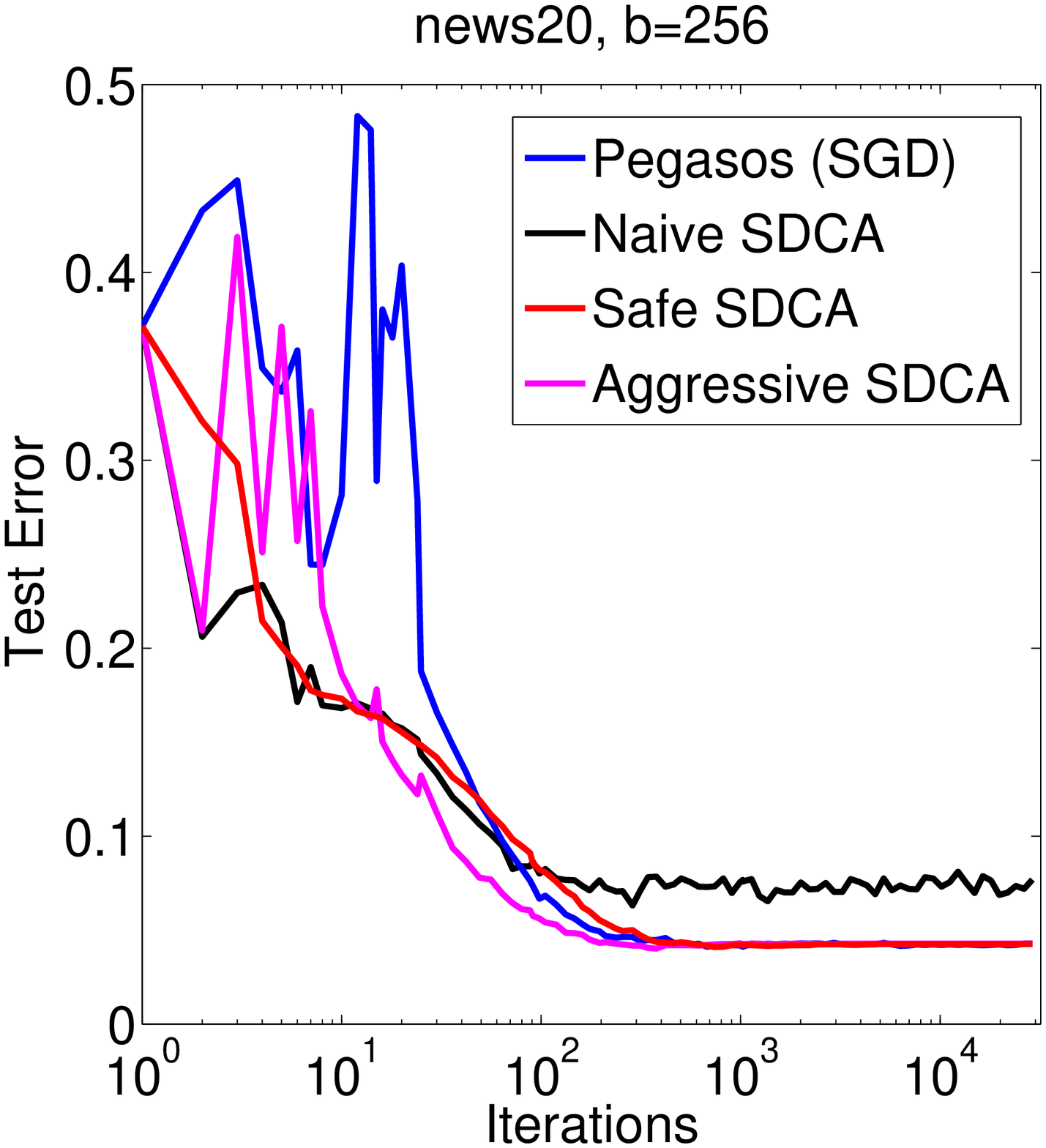}}
\subfigure{\includegraphics[width=1.5in]{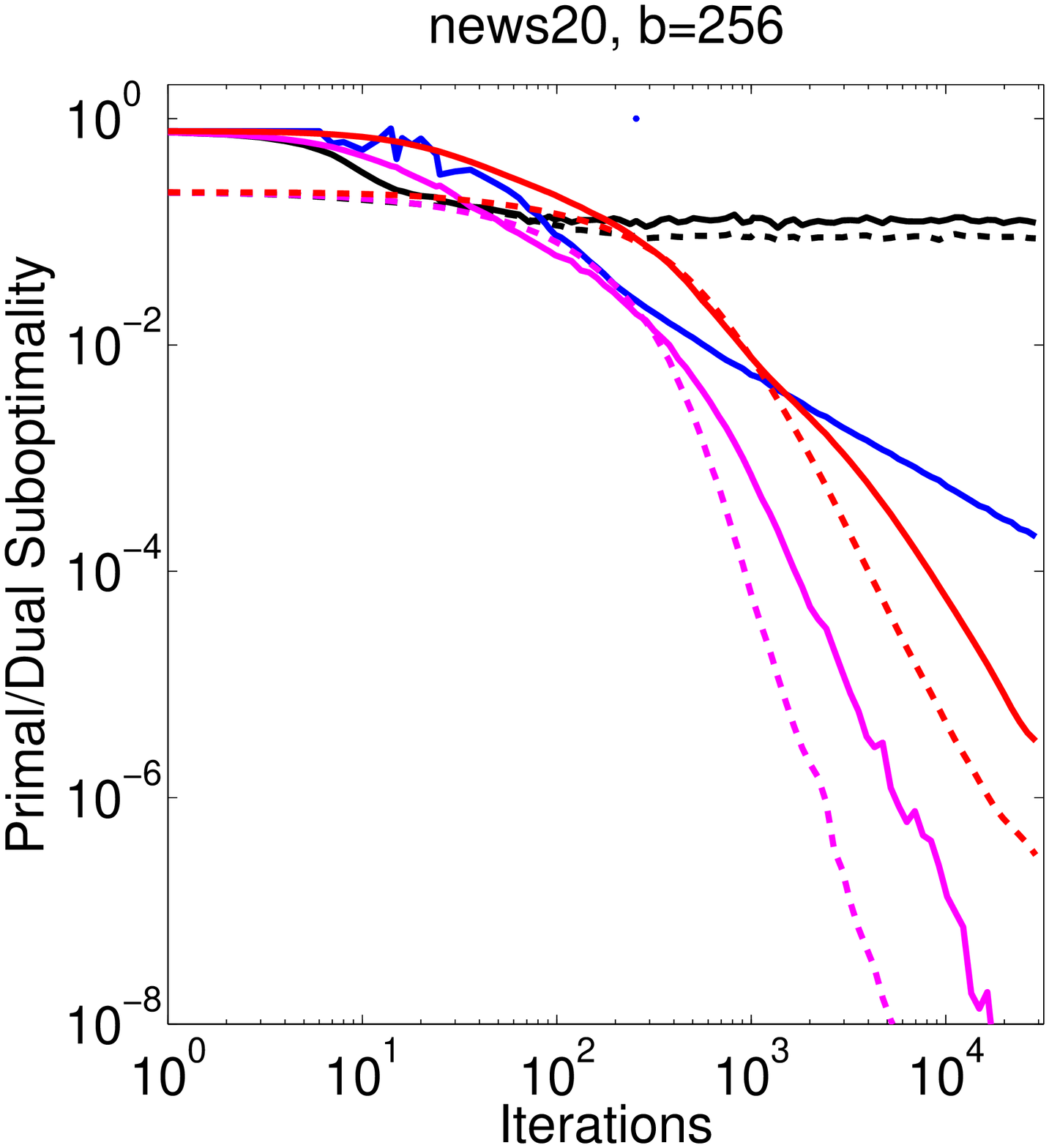}}
\subfigure{\includegraphics[width=1.5in]{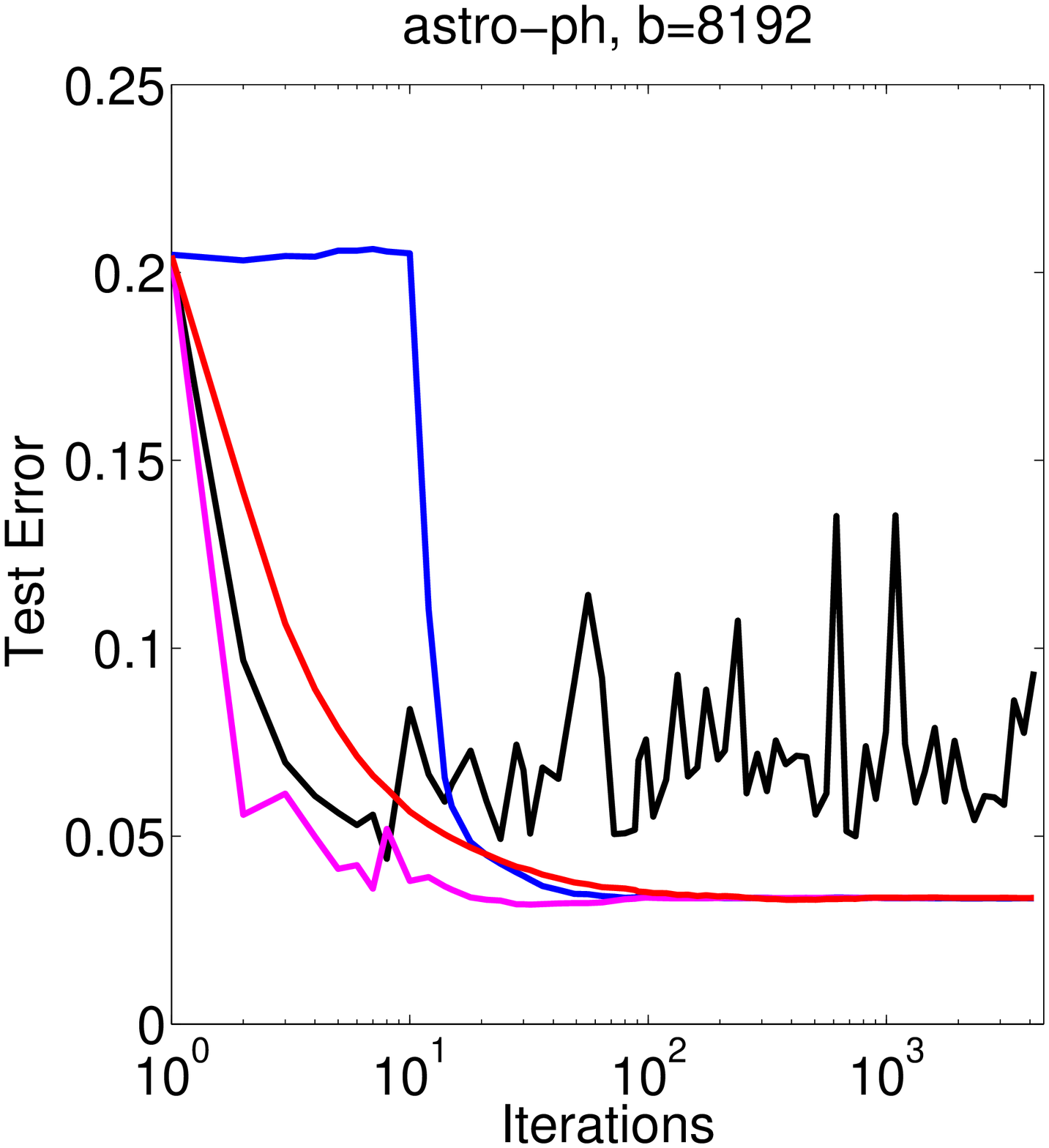}}
\subfigure{\includegraphics[width=1.5in]{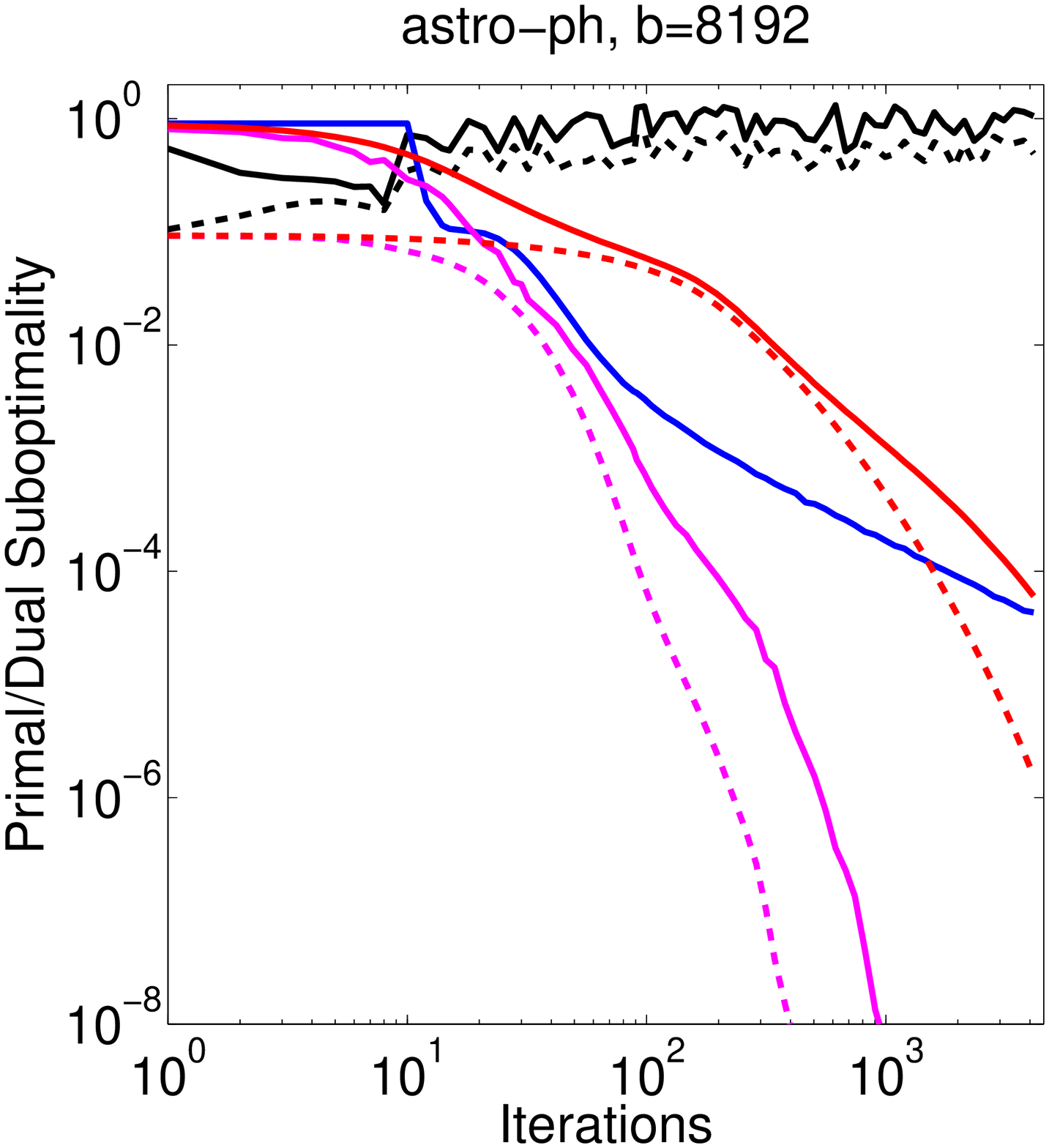}}
\vspace{-0.2in}
\small
 \caption{\small Evolutions of primal (solid) and dual (dashed) sub-optimality and test error  for news20 and astro-ph datasets.
 Instead of tail averaging, in the experiments we used decaying averaging with $\vc{\bar{\w}}{t} = 0.9 \vc{\bar{\w}}{t-1}+ 0.1 \vc{\w}{t}$.}
 \label{fig:failureOfNaiveApproach}
\end{figure*}

Figure~\ref{fig:itertions2epsilon} shows the required number of iterations (corresponding
to the parallel runtime) required for achieving a primal suboptimality
of $0.001$ using Pegasos, naive SDCA, safe SDCA and aggressive SDCA, on four benchmark datasets
detailed in Table~\ref{tab:datasets}, using different mini-batch sizes.  Also shown
(on an independent scale; right axis) is the leading term $\tfrac{\beta_b}{b}$ in our complexity results.  The results
confirm the advantage of SDCA over Pegasos, at least for $b=1$, and
that both Pegasos and SDCA enjoy nearly-linear speedups, at least for
small batch sizes.  Once the mini-batch size is such that $\tfrac{\beta_b}{b}$
starts flattening out (corresponding to $b \approx \tfrac{1}{\sigma^2}$, and so
significant correlations inside each mini-batch), the safe variant of
SDCA follows a similar behavior and does not allow for much
parallelization speedup beyond this point, but at least does not
deteriorate like the naive variant.  Pegasos and the aggressive
variant do continue showing speedups beyond $b \approx \tfrac{1}{\sigma^2}$.
The experiments clearly demonstrate the aggressive modification allows
SDCA to continue enjoying roughly the same empirical speedups as
Pegasos, even for large mini-batch sizes, maintaining an advantage
throughout.  It is interesting to note that the aggressive
variant continues improving even past the point of failure of the
naive variant, thus establishing that it is empirically important to
adjust the step-size to achieve a balance between safety and
progress.

\begin{table}[t!]
\small
\caption{\small Datasets and regularization parameters $\lambda$ used; ``\%'' is
  percent of features which are non-zero. \emph{cov} is the forest covertype dataset of \citet{pegasos}, \emph{astro-ph} consists of abstracts of papers from physics also of \citet{pegasos}, \emph{rcv1} is from the Reuters collection and \emph{news20} is from the 20 news groups both obtained from  libsvm collection \cite{libsvm}.\\}
\label{tab:datasets}
\centering
{
\footnotesize
\begin{tabular}{|c|r|r|r|r|l|}
\hline
 Data & \multicolumn{1}{|c|}{\# train} &
 \multicolumn{1}{|c|}{\# test} &
 \multicolumn{1}{|c|}{\# dim} &
 \multicolumn{1}{|c|}{\%}&
 \multicolumn{1}{|c|}{$\lambda$}
 \\ \hline \hline
 cov & 522,911  & 58,101 & 54 & 22 & $0.000010$

 \\
 rcv1 & 20,242 & 677,399 & 47,236 & 0.16 & $0.000100$
 \\

 astro-ph& 29,882 & 32,487  & 99,757 & 0.08& $0.000050$
 \\

 news20 & 15,020 & 4,976 & 1,355,191 & 0.04 & $0.000125$\\

 \hline
\end{tabular}
}
\end{table}

In Figure~\ref{fig:failureOfNaiveApproach} we demonstrate the evolution of solutions using the
various methods for two specific data sets.  Here we can again see the
relative behaviour of the methods, as well as clearly see the failure
of the naive approach, which past some point causes the objective to
deteriorate and does not converge to the optimal solution.

\section{Conclusion}

\textbf{Contribution.}
Our contribution in this paper is twofold: (i) we identify the
spectral norm of the data, and through it the quantity $\beta_b$, as the important
quantity controlling guarantees for mini-batched/parallelized Pegasos (primal method) and SDCA (dual method).
We provide the first analysis of mini-batched Pagasos, with
the non-smooth hinge-loss, that shows speedups, and we analyze for the
first time mini-batched SDCA with guarantees expressed in terms of the primal problem (hence, our mini-batched SDCA is a primal-dual method); (ii) based on our analysis, we present
novel variants of mini-batched SDCA which are necessary for achieving
speedups similar to those of Pegasos, and thus open the door to
effective mini-batching using the often-empirically-better SDCA.

\textbf{Related work.} Our safe SDCA mini-batching approach is similar to the parallel
coordinate descent methods of \citet{shotgun} and \citet{richtarikBigData},
but we provide an analysis in terms of the primal SVM objective, which
is the more relevant object of interest.  Furthermore,
\citeauthor{shotgun}'s analysis does {\em not} use a step-size and is
thus limited only to small enough mini-batches---if the spectral norm
is unknown and too large a mini-batch is used, their
method might not converge.  \citeauthor{richtarik}'s method does
incorporate a {\em fixed} step-size, similar to our safe variant, but
as we discuss this step-size might be too conservative for achieving
the true potential of mini-batching.

\textbf{Generality.} We chose to focus on Pegasos and SDCA with regularized hinge-loss
minimization, but all our results remain unchanged for any Lipschitz
loss functions.  Furthermore, Lemma~\ref{lemma:nablaL} can also be used
to establish identical speedups for mini-batched SGD optimization of $\min_{\norm{\w}\leq B} \hat{L}(\w)$, as
well as for direct stochastic approximation of the population
objective (generalization error) $\min L(\w)$.  In considering the
population objective, the sample size is essentially infinite, we
sample with replacements (from the population), $\sigma^2$ is a bound
on the second moment of the data distribution, and $\beta_b = 1+(b-1)\sigma^2$.

\textbf{Experiments.} Our experiments confirm the empirical advantages of SDCA over Pegasos,
previously observed without mini-batching.  However, we also point out
that in order to perform mini-batched SDCA effectively, a step-size is
needed, detracting from one of the main advantages of SDCA over
Pegasos.  Furthermore, in the safe variant, this stepsize needs to be
set according to the spectral norm (or bound on the spectral norm),
with too small a setting for $\beta$ (i.e., too large steps) possibly
leading to non-convergence, and too large a setting for $\beta$
yielding reduced speedups.  In contrast, the Pegasos stepsize is {\em
  independent} of the spectral norm, and in a sense Pegasos adapts
implicitly (see, e.g., its behavior compared to aggressive SDCA in the
experiments).  We do provide a more aggressive variant of SDCA, which
does match Pegasos's speedups empirically, but this requires an
explicit heuristic adaptation of the stepsize.

\textbf{Parallel Implementation.} In this paper we analyzed the iteration complexity, and behavior of the
iterates, of mini-batched Pegasos and SDCA.  Unlike ``pure'' ($b$=1)
Pegasos and SDCA, which are not amenable to parallelization, using
mini-batches does provide opportunities for it.  Of course,
actually achieving good parallelization speedups on a specific
architecture in practice requires an efficient parallel, possibly
distributed, implementation of the iterations.  In this regard, we
point out that the core computation required for both Pegasos and SDCA
is that of computing $\sum_{i\in A} g_i(\ve{\w}{\x_i}) \x_i$, where $g$ is
some scalar function.  Parallelizing such computations efficiently in a
distributed environment has been studied by
e.g., \citet{dekel2012optimal,hsu2011parallel};  their methods can be
used here too.  Alternatively, one could also consider asynchronous or
delayed updates \cite{AgarwalDuchi,recht2011hogwild}.

\bibliography{minibatch}

\begin{thebibliography}{16}
\providecommand{\natexlab}[1]{#1}
\providecommand{\url}[1]{\texttt{#1}}
\expandafter\ifx\csname urlstyle\endcsname\relax
  \providecommand{\doi}[1]{doi: #1}\else
  \providecommand{\doi}{doi: \begingroup \urlstyle{rm}\Url}\fi

\bibitem[Agarwal \& Duchi(2011)Agarwal and Duchi]{AgarwalDuchi}
Agarwal, A. and Duchi, J.
\newblock Distributed delayed stochastic optimization.
\newblock In \emph{NIPS}, 2011.

\bibitem[Bradley et~al.(2011)Bradley, Kyrola, Bickson, and Guestrin]{shotgun}
Bradley, J.K., Kyrola, A., Bickson, D., and Guestrin, C.
\newblock Parallel coordinate descent for l1-regularized loss minimization.
\newblock In \emph{ICML}, 2011.

\bibitem[Cotter et~al.(2011)Cotter, Shamir, Srebro, and Sridharan]{CotterEtAl}
Cotter, A., Shamir, O., Srebro, N., and Sridharan, K.
\newblock Better mini-batch algorithms via accelerated gradient methods.
\newblock In \emph{NIPS}, 2011.

\bibitem[Dekel et~al.(2012)Dekel, Gilad-Bachrach, Shamir, and
  Xiao]{dekel2012optimal}
Dekel, O., Gilad-Bachrach, R., Shamir, O., and Xiao, L.
\newblock Optimal distributed online prediction using mini-batches.
\newblock \emph{Journal of Machine Learning Research}, 13:\penalty0 165--202,
  2012.

\bibitem[Hsieh et~al.(2008)Hsieh, Chang, Lin, Keerthi, and Sundarajan]{dcd}
Hsieh, C-J., Chang, K-W., Lin, C-J., Keerthi, S.S., and Sundarajan, S.
\newblock A dual coordinate descent method for large-scale linear svm.
\newblock In \emph{ICML}, 2008.

\bibitem[Hsu et~al.(2011)Hsu, Karampatziakis, Langford, and
  Smola]{hsu2011parallel}
Hsu, D., Karampatziakis, N., Langford, J., and Smola, A.
\newblock Parallel online learning.
\newblock \emph{arXiv:1103.4204}, 2011.

\bibitem[Libsvm()]{libsvm}
Libsvm.
\newblock \emph{Datasets}.
\newblock http://www.csie.ntu.edu.tw/${}^{\sim}$cjlin/
  libsvmtools/datasets/binary.html.

\bibitem[Nesterov(2012)]{NesterovRCDM}
Nesterov, Yu.
\newblock Efficiency of coordinate descent methods on huge-scale optimization
  problems.
\newblock \emph{SIAM J. Optimization}, 22:\penalty0 341--362, 2012.

\bibitem[Niu et~al.(2011)Niu, Recht, Re, and Wright]{recht2011hogwild}
Niu, F., Recht, B., Re, C., and Wright, S.
\newblock Hogwild: A lock-free approach to parallelizing stochastic gradient
  descent.
\newblock In Shawe-Taylor, J., Zemel, R.S., Bartlett, P., Pereira, F.C.N., and
  Weinberger, K.Q. (eds.), \emph{NIPS 24}, pp.\  693--701. 2011.

\bibitem[Rakhlin et~al.(2012)Rakhlin, Shamir, and Sridharan]{MakingSGDOptimal}
Rakhlin, A., Shamir, O., and Sridharan, K.
\newblock Making gradient descent optimal for strongly convex stochastic
  optimization.
\newblock \emph{ArXiv:1109.5647}, 2012.

\bibitem[Richt\'arik \& Tak\'a\v{c}(2012)Richt\'arik and
  Tak\'a\v{c}]{richtarikBigData}
Richt\'arik, P. and Tak\'a\v{c}, M.
\newblock Parallel coordinate descent methods for big data optimization.
\newblock \emph{ArXiv:1212.0873}, 2012.

\bibitem[Richt\'arik \& Tak\'a\v{c}(2013)Richt\'arik and
  Tak\'a\v{c}]{richtarik}
Richt\'arik, P. and Tak\'a\v{c}, M.
\newblock Iteration complexity of randomized block-coordinate descent methods
  for minimizing a composite function.
\newblock \emph{Mathematical Programming}, 2013.
\newblock \doi{10.1007/s10107-012-0614-z}.

\bibitem[Shalev-Shwartz \& Tewari(2011)Shalev-Shwartz and
  Tewari]{TewariShalevShwartzJMLR2011}
Shalev-Shwartz, S. and Tewari, A.
\newblock {Stochastic Methods for l1-regularized Loss Minimization}.
\newblock \emph{JMLR}, 12:\penalty0 1865--1892, 2011.

\bibitem[Shalev-Shwartz \& Zhang(2012)Shalev-Shwartz and
  Zhang]{ShalevShawartzZhang}
Shalev-Shwartz, S. and Zhang, T.
\newblock Stochastic dual coordinate ascent methods for regularized loss
  minimization.
\newblock \emph{ArXiv:1209.1873}, 2012.

\bibitem[Shalev-Shwartz et~al.(2011)Shalev-Shwartz, Singer, Srebro, and
  Cotter]{pegasos}
Shalev-Shwartz, S., Singer, Y., Srebro, N., and Cotter, A.
\newblock Pegasos: Primal estimated sub-gradient solver for svm.
\newblock \emph{Mathematical Programming: Series A and B- Special Issue on
  Optimization and Machine Learning}, pp.\  3--30, 2011.

\bibitem[Zhang(2004)]{zhang}
Zhang, T.
\newblock Solving large scale linear prediction using stochastic gradient
  descent algorithms.
\newblock In \emph{ICML}, 2004.

\end{thebibliography}
\bibliographystyle{icml2013}

\clearpage

\onecolumn

\appendix

\newcommand{\vu}{ \boldsymbol{\chi} }    %

\section{Proof of Theorem \ref{thm:dualityGapForLipFunctions}}

The proof of Theorem~\ref{thm:dualityGapForLipFunctions} follows mostly
along the path of \citet{ShalevShawartzZhang}, crucially using Lemma~\ref{lemma:H}, and with a few other required modifications detailed
below.

We will prove the theorem for a general $L$-Lipschitz loss function
$\ell(\cdot)$.  For consistency with \citeauthor{ShalevShawartzZhang}, we
will also allow example-specific loss functions $\ell_i$, $i=1,2,\dots,n$, and only require each
$\ell_i$ be individually Lipschitz, and thus refer to the primal and
dual problems (expressed slightly differently but equivalently):
\begin{align}\tag{P}\label{eq:P}
 \min_{\w\in\R^d} & \left[\bP(\w) \eqdef \tfrac1n \sum_{i=1}^n \ell_i(\ve{\w}{\x_i})
 +\tfrac\lambda2 \|\w\|^2\right], \\
\tag{D}
 \max_{\alf \in\R^n} & \left[\bD(\alf)\eqdef -\tfrac1n \sum_{i=1}^n \ell_i^*(-\cor{\alf}{i}) -
          \tfrac {\lambda}{2} \left\|\tfrac1{\lambda n} \X^\trans \alf  \right\|_2^2\right],
\end{align}
where $\ell^*_i(u)=\max_z (z u -\ell_i(z))$ is the Fenchel conjugate of
$\ell_i$.  In the above we dropped without loss of generality the
labels $y_i$ since we can always substitute $\x_i \leftarrow y_i
\x_i$.  For the hinge loss $\ell_i(a)=[1-a]_+$ we have
$\ell_i^*(-a) = -a $ for $a \in [0,1]$ and $\ell_i^*(-a)=\infty$
otherwise, thus encoding the box constraints.  Recall also (from \eqref{eq:walpha}) that $\w(\alf)
= \frac{1}{\lambda n} \sum_{i=1}^n \alf_i \x_i$ and so
$\norm{\w(\alf)}^2 = \frac{1}{\lambda^2 n^2} \alf^\trans \X \X^\trans
\alf = \norm{ \frac{1}{\lambda n} \X^\trans \alf}^2$.

The separable approximation $\bH(\vdelta,\alf)$ defined in
\eqref{eq:H} now has the more general form:
\begin{align}\label{eq:H_definition}
 \calH(\vdelta,\alf) &:= -\tfrac1n \sum_{i=1}^n \ell_i^*(-(\cor{\alf}{i}+\cor{\vdelta}{i}))
   - \tfrac\lambda2
  \left(
     \norm{\w(\alf)}^2
     + \beta_b \tfrac1{\lambda n} \sum_{i=1}^n \norm{\x_i}^2 \vdelta_i^2
     +2  \left(\tfrac1{\lambda n}   \vdelta\right)^\trans  \X  \w(\alf) \right)
\end{align}
and all the properties mentioned in Section \ref{sec:SDCA}, including
Lemma \ref{lemma:H}, still hold.

Our goal here is to get a bound on the duality gap, which we will
denote by
\begin{align}
 \calG(\alf)
    &\eqdef \bP(\w(\alf)) - \bD(\alf) \label{eq:dualityGap}
  = \tfrac1n \sum_{i=1}^n \left[ \ell_i(\ve{\w(\alf)}{\x_i})
    +\ell_i^*(-\cor{\alf}{i}) + \cor{\alf}{i} \ve{\w(\alf)}{\x_i} \right].
\end{align}

The analysis now rests on the following lemma, paralleling Lemma 1 of
\citet{ShalevShawartzZhang}, which bounds the expected improvement in the dual
objective after a single iteration in terms of the duality gap:

\begin{lemma}
\label{lem:basicLemma}
For any $t$ and any $s\in[0,1]$ we have
\begin{equation}\label{eq:relationOfDualDecreaseAndDualityGap}
 \Exp_{A_t}[\bD(\vc{\alf}{t+1})]-\bD(\vc{\alf}{t})
\geq
  b \left (
  \tfrac {s  }{n} \calG(\vc{\alf}{t})
  - \left(\tfrac{s}{n}\right)^2
       \tfrac{\beta_b}{ 2  \lambda }
  \vc{G}{t}
\right),
\end{equation}
where
\begin{align}
 \vc{G}{t} &\eqdef \tfrac1n  \label{eq:defOfG}
 \sum_{i=1}^n %
        \|\x_i\|^2
      (\corit{\vu}{i}{t}-\corit{\alf}{i}{t})^2 \leq G,
\end{align}
with $G=4L$ for general $L$-Lipschitz loss, and $G=1$ for the hinge
loss, and $-\corit{\vu}{i}{t}  \in \ell'_i(\ve{\w(\vc{\alf}{t})}{\x_i})$.
\end{lemma}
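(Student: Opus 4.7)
The plan is to mimic the argument of \citet{ShalevShawartzZhang} (their Lemma~1), modified in two ways for the mini-batch setting: apply Lemma~\ref{lemma:H} to push a lower bound on the separable surrogate $\calH$ through to the actual dual $\bD$, which introduces the $b/n$ prefactor, and retain the extra $\beta_b$ that now weights the quadratic term in $\calH$ owing to the safe step-size $\beta=\beta_b$. The key preliminary observation is that $\calH(\vdelta,\vc{\alf}{t})$ is coordinate-separable in $\vdelta$---the quadratic piece is $\beta_b\|\vdelta\|^2$, not $\vdelta^{\trans}\Q\vdelta$---so the coordinate-wise rule \eqref{eq:MBalphai} run on $A_t$ with $\beta=\beta_b$ coincides with the $A_t$-restriction of the joint box-constrained maximiser $\vdelta^\star$ of $\calH(\cdot,\vc{\alf}{t})$ over all of $\R^n$. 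Hence $\vc{\alf}{t+1}=\vc{\alf}{t}+\vdelta^\star_{[A_t]}$, and Lemma~\ref{lemma:H} gives, for every box-feasible $\vdelta$,
\[
 \Exp_{A_t}[\bD(\vc{\alf}{t+1})]-\bD(\vc{\alf}{t}) \;\geq\; \tfrac{b}{n}\bigl(\calH(\vdelta^\star,\vc{\alf}{t})-\bD(\vc{\alf}{t})\bigr) \;\geq\; \tfrac{b}{n}\bigl(\calH(\vdelta,\vc{\alf}{t})-\bD(\vc{\alf}{t})\bigr).
\]
It therefore suffices to exhibit one well-chosen $\vdelta$ whose right-hand side matches the claim.

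Following \citet{ShalevShawartzZhang}, I would fix $s\in[0,1]$, take a subgradient $-\cor{\vu}{i}\in\partial\hingeloss_i(\ve{\w(\vc{\alf}{t})}{\x_i})$ (which automatically lies in $-\dom\hingeloss_i^*$), and plug in $\vdelta=s(\vu-\vc{\alf}{t})$; feasibility follows because $-(\alf_i+\vdelta_i)=(1-s)(-\alf_i)+s(-\vu_i)$ is a convex combination of points in $\dom\hingeloss_i^*$. Expanding $\calH(\vdelta,\vc{\alf}{t})-\bD(\vc{\alf}{t})$ from \eqref{eq:H_definition} produces three summands: a conjugate difference, the $\beta_b$-weighted quadratic, and the linear cross-term $-\tfrac1n\sum_i\vdelta_i\ve{\w(\vc{\alf}{t})}{\x_i}$. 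Convexity of $\hingeloss_i^*$ bounds the conjugate difference below by $-s[\hingeloss_i^*(-\vu_i)-\hingeloss_i^*(-\alf_i)]$, and the Fenchel--Young identity $\hingeloss_i^*(-\vu_i)=-\vu_i\ve{\w(\vc{\alf}{t})}{\x_i}-\hingeloss_i(\ve{\w(\vc{\alf}{t})}{\x_i})$ rewrites it so that, after summing with the linear cross-term, all $\vu_i\ve{\w}{\x_i}$ contributions cancel and one is left with exactly $s\calG(\vc{\alf}{t})$ via the duality-gap formula \eqref{eq:dualityGap}.

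What remains is the $s^2$ residue $-\tfrac{\beta_b s^2}{2\lambda n^2}\sum_i\|\x_i\|^2(\vu_i-\alf_i)^2$, which by definition \eqref{eq:defOfG} equals $-\tfrac{\beta_b s^2}{2\lambda n}\vc{G}{t}$. Thus $\calH(\vdelta,\vc{\alf}{t})-\bD(\vc{\alf}{t})\geq s\calG(\vc{\alf}{t})-\tfrac{\beta_b s^2}{2\lambda n}\vc{G}{t}$, and multiplying by $b/n$ as above produces \eqref{eq:relationOfDualDecreaseAndDualityGap} verbatim. The uniform bound $\vc{G}{t}\leq G$ then drops out of $\|\x_i\|\leq 1$ together with coordinatewise estimates on $|\vu_i-\alf_i|$: for the hinge, $\alf_i,\vu_i\in[0,1]$ so $(\vu_i-\alf_i)^2\leq 1$ and $G=1$; for a general $L$-Lipschitz loss, $|\alf_i|,|\vu_i|\leq L$ yields the stated constant.

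The proof is not conceptually difficult---it is a faithful mini-batch lift of the serial SDCA argument---but the scaling bookkeeping is unforgiving, and that will be the main place to be careful. The factor $\beta_b$ must sit \emph{only} on the quadratic term and the factor $b$ must appear as an overall prefactor, never tangled with $\beta_b$ or with $s^2/n^2$. The clean way to avoid slips is to keep the two steps strictly separated: the separability-plus-optimality argument for $\vdelta^\star$ is deterministic (per realisation of $A_t$) and furnishes an inequality on $\calH$, and only after that is in place does one take expectation via Lemma~\ref{lemma:H} to obtain the actual dual-progress bound.
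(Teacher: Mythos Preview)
Your proposal is correct and follows essentially the same route as the paper's proof: apply Lemma~\ref{lemma:H} with the actual update direction, invoke optimality of that direction for $\calH$ to pass to the comparison vector $s(\vu-\vc{\alf}{t})$, then use convexity of $\ell_i^*$ together with the Fenchel--Young equality to produce $s\,\calG(\vc{\alf}{t})$ and isolate the $\beta_b$-weighted quadratic residue. Your explicit articulation of why separability of $\calH$ makes the coordinate-wise rule on $A_t$ equal the $A_t$-restriction of the global maximiser is a helpful clarification that the paper leaves implicit, but the argument is otherwise the same.
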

\begin{proof}
  The situation here is trickier then in the case $b=1$ considered by
  \citeauthor{ShalevShawartzZhang}, and we will first bound the
  right hand side of \eqref{eq:relationOfDualDecreaseAndDualityGap} by
  $\bH-\bD$ and then use the fact that $\vdelta^{(t)}$ is a minimizer of
  $\bH(\cdot,\alf)$:
\begin{align*}
\lefteqn{ -\frac nb \left(\Exp[\bD(\vc{\alf}{t+1})]-\bD(\vc{\alf}{t})\right)
 =-\frac nb \left(\Exp[\bD(\vc{\alf}{t}+\vsubset{\vc{\vdelta}{t}}{A_t})]-\bD(\vc{\alf}{t})\right)
 \overset{\text{(Lemma 3)}}{\leq}
 -\calH(\vc{\vdelta}{t},\vc{\alf}{t})+\bD(\vc{\alf}{t}) }
 \\
 & =
 \frac1n
 \sum_{i=1}^n \left(\ell_i^*(-(\corit{\alf}{i}{t}+\corit{\vdelta}{i}{t}))
  -\ell_i^*(-\corit{\alf}{i}{t}) \right)
+ \frac\lambda2
  \left(
       \beta_b \left\|\frac1{\lambda n} \vc{\vdelta}{t} \right\|^2_\X
     +2  \left(\frac1{\lambda n}   \vc{\vdelta}{t} \right) ^\trans \X \w(\vc{\alf}{t}) \right),
\\
\intertext{where we denote $\norm{{\bf u}}^2_{\X} \eqdef \sum_{i=1}^n {\bf
    u}^2_i \|\cor{\x}{i}\|^2$.  We will now use the optimally of
  $\vc{\vdelta}{t}$ to upper bound the above, noting that if we
  replace $\vc{\vdelta}{t}$ with any quantity, and in particular with
  $s(\vu^{(t)}-\alf^{(t)})$, we can only decrease $\bH(\cdot,\alf^{(t)})$, and
  thus increase the right-hand-side above:}
& \leq
 \tfrac1n
 \sum_{i=1}^n \left[\ell_i^*(-(\corit{\alf}{i}{t}+s(\corit{\vu}{i}{t}-\corit{\alf}{i}{t})))
  -\ell_i^*(-\corit{\alf}{i}{t}) \right]\\
  & \quad\quad\quad
+ \tfrac\lambda2
  \left(
       \beta_b \left\|\tfrac1{\lambda n} s(\vc{\vu}{t}-\vc{\alf}{t}) \right\|^2_\X
     +2  \left(\tfrac1{\lambda n}   s(\vc{\vu}{t}-\vc{\alf}{t}) \right) ^\trans \X \w(\vc{\alf}{t}) \right)\\
\intertext{Now from convexity we have $\ell_i^*(-(\corit{\alf}{i}{t}+s(\corit{\vu}{i}{t}-\corit{\alf}{i}{t})))  \leq s \ell_i^*(\corit{-\vu}{i}{t})
     +(1-s) \ell_i^*(-\corit{\alf}{i}{t})$, and so:}
& \leq
 \tfrac1n
 \sum_{i=1}^n
 \left(
    s \ell_i^*(\corit{-\vu}{i}{t})
    +s\corit{\vu}{i}{t} \ve{\w(\vc{\alf}{t})}{\x_i}
    -s \ell_i^*(-\corit{\alf}{i}{t})
 \right)\\
 & \quad \quad \quad 
 + \tfrac\lambda2
  \left(
       \beta_b \left\|\tfrac1{\lambda n} s(\vu^{(t)}-\vc{\alf}{t}) \right\|^2_\X
     +2  \left(\tfrac1{\lambda n}   s(-\vc{\alf}{t}) \right) ^\trans \X
     \w(\vc{\alf}{t}) \right) \\
\intertext{and from conjugacy we have
  $\ell_i^*(-\corit{\vu}{i}{t})=-\corit{\vu}{i}{t}
  \ve{\w(\vc{\alf}{t})}{\x_i} - \ell_i(\ve{\w(\vc{\alf}{t})}{\x_i})$,
  and so:}
&\leq
 \tfrac sn
 \sum_{i=1}^n
 \left(
     -\corit{\vu}{i}{t} \ve{\w(\vc{\alf}{t})}{\x_i}
     -  \ell_i\left(\ve{\w(\vc{\alf}{t})}{\x_i}\right)
    + \corit{\vu}{i}{t} \ve{\w(\vc{\alf}{t})}{\x_i}
    -  \ell_i^*(-\corit{\alf}{i}{t})
 \right)
 \\&\quad\quad\quad+ \tfrac\lambda2
  \left(
       \beta_b \left\|\tfrac1{\lambda n} s(\vu^{(t)}-\vc{\alf}{t}) \right\|^2_{\X}
            +2  \left(\tfrac1{\lambda n}   s(-\vc{\alf}{t}) \right) ^\trans \X \w(\vc{\alf}{t})
 \right)
\\& \leq
 \tfrac sn
 \sum_{i=1}^n
 \left(
      -  \ell_i\left(\ve{\w(\vc{\alf}{t})}{\x_i}\right)
    -  \ell_i^*(-\corit{\alf}{i}{t})
    - \corit{\alf}{i}{t} \ve{\w(\vc{\alf}{t})}{\x_i}
 \right)
+ \tfrac\lambda2
       \beta_b \left\|\tfrac1{\lambda n} s(\vu^{(t)}-\vc{\alf}{t}) \right\|^2_\X
\\&\overset{\eqref{eq:dualityGap}}{=}
  - s \calG(\vc{\alf}{t})
  + \tfrac1{2 \lambda }
  \left(\tfrac sn\right)^2
  \left(
       \beta_b \left\|(\vu^{(t)}-\vc{\alf}{t}) \right\|^2_\X
      \right).
\end{align*}

Multiplying both sides of the resulting inequality by $\tfrac {-b}n$ we obtain
\eqref{eq:relationOfDualDecreaseAndDualityGap}.  To get the bound on
$G^{(t)}$, recall that $\ell(\cdot)$ is $L$-Lipschitz, hence $-L \leq
\vu_i^{(t)} \leq L$.  Furthermore, $\alf^{(t)}$ is dual feasible, hence
$\ell_i^*(-\alf_i^{(t)}) < \infty$ and so $(-\alf_i^{(t)})$ is a
(sub)derivative of $\ell_i$ and so we also have $-L \leq \alf_i^{(t)}
\leq L$ and for each $i$, and $(\corit{\vu}{i}{t}-\corit{\alf}{i}{t})^2 \leq
4 L$.  For the hinge loss we have $0 \leq \vchi_i^{(t)},\alf_i^{(t)} \leq
1$, and so $(\corit{\vu}{i}{t}-\corit{\alf}{i}{t})^2 \leq 1$.
\end{proof}

We are now ready to prove the theorem.
\begin{proof}[Proof of Theorem~\ref{thm:dualityGapForLipFunctions}]
We will bound the change in the dual sub-optimality $\vc{\epsilon_\bD}{t} \eqdef
  \bD(\alf^*)-\bD(\vc{\alf}{t})$:
\begin{multline}
  \Exp_{A_t}[\vc{\epsilon_\bD}{t+1}]
=
\Exp[\bD(\vc{\alf}{t})-\bD(\vc{\alf}{t+1})+\vc{\epsilon_\bD}{t}]
 \overset{\textrm{(Lemma~\ref{lem:basicLemma})}}{\leq}
 - b \left (
  \tfrac {s  }{n} \calG(\vc{\alf}{t})
  - \left(\tfrac{s}{n}\right)^2
       \tfrac{\beta_b}{ 2  \lambda }
G\right)
+
\vc{\epsilon_\bD}{t}
\\ \overset{\vc{\epsilon_\bD}{t} \leq \calG(\vc{\alf}{t})}{\leq}
- b
  \tfrac {s  }{n} \vc{\epsilon_\bD}{t}
+ b    \left(\frac{s}{n}\right)^2
       \tfrac{\beta_b}{ 2  \lambda }G
       +
\vc{\epsilon_\bD}{t}
 =
(1- b
  \tfrac {s  }{n} )\vc{\epsilon_\bD}{t}
+ b    \left(\frac{s}{n}\right)^2
       \tfrac{\beta_b}{ 2  \lambda }G.
         \label{eq:expectedBound}
\end{multline}
Unrolling this recurrence, we have:
\begin{align}\nonumber
  \Exp[\vc{\epsilon_\bD}{t}]
&\leq
(1- b
  \tfrac {s  }{n} )^t
  \vc{\epsilon_\bD}{0}
+
b    \left(\tfrac{s}{n}\right)^2
       \frac{\beta_b}{ 2  \lambda }G
\sum_{i=0}^{t-1} (1- b
  \tfrac {s  }{n} )^i
\leq
(1- b
  \tfrac {s  }{n} )^t
  \vc{\epsilon_\bD}{0}
+
    \left(\tfrac{s}{n}\right)
       \tfrac{\beta_b G}{ 2  \lambda }.
\end{align}
Setting  $s=1$ and
\begin{equation}
  \label{eq:t0}
t_0:= [\lceil \tfrac nb \log(2\lambda n
\vc{\epsilon_\bD}{0}/ (G\beta_b)) \rceil]_+
\end{equation}
yields:
\begin{align}\label{eq:induction_step1}
  \Exp[\vc{\epsilon_\bD}{t_0}]
 &\leq
(1-
  \tfrac {b  }{n} )^{t_0}
  \vc{\epsilon_\bD}{0}
+
     \frac{s}{n}
       \frac{\beta_b G}{ 2  \lambda }
\leq
\frac{G\beta_b}{2\lambda n \epsilon_\bD} \epsilon_\bD
+
    \frac{1}{n}
       \frac{\beta_b G}{ 2  \lambda }
=
       \frac{\beta_b G}{ \lambda n   }.
\end{align}
Following the proof of \citeauthor{ShalevShawartzZhang} we will now show by
induction that
\begin{equation}\label{eq:expectationOfDualFeasibility}
\forall t\geq t_0 :  \Exp[\vc{\epsilon_\bD}{t}] \leq \frac{2 \beta_b G}{\lambda (2n+b(t-t_0))}.
\end{equation}
Clearly, \eqref{eq:induction_step1} implies that \eqref{eq:expectationOfDualFeasibility} holds for $t=t_0$.
Now, if it holds for some $t\geq t_0$, we show that it also holds for
$t+1$. Using $s=\frac{2n}{2n+b(t-t_0)}$ in \eqref{eq:expectedBound} we have:
\begin{align}
\Exp[\epsilon_\bD^{(t+1)}]
&\overset{\eqref{eq:expectedBound}}{\leq}
(1- b
  \tfrac {s  }{n} )\Exp[\vc{\epsilon_\bD}{t}]
+ b    \left(\frac{s}{n}\right)^2
       \frac{\beta_b}{ 2  \lambda }G
\overset{\eqref{eq:expectationOfDualFeasibility}}{\leq}
(1- b
  \tfrac {s  }{n} ) \frac{2 \beta_b G}{\lambda (2n+b(t-t_0))}
+ b    \left(\frac{s}{n}\right)^2
       \frac{\beta_b}{ 2  \lambda }G
\nonumber%
\\
&=
(1- b
  \frac{2}{2n+b(t-t_0)} ) \frac{2 \beta_b G}{\lambda (2n+b(t-t_0))}
   + b    \left(\frac{2}{2n+b(t-t_0)}\right)^2
       \frac{\beta_b}{ 2  \lambda }G
\nonumber%
\\
&=
\tfrac{2G\beta_b}{\lambda (2n+b(t-t_0)+b)}
\tfrac{(2n+b(t-t_0)+b)(2n+b(t-t_0)-b)}{(2n+b(t-t_0))^2}
\leq
\label{eq:fa2f2ffvgafda}
\frac{2G\beta_b}{\lambda (2n+b(t-t_0)+b)},
\end{align}
where in the last inequality we used the arithmetic-geometric mean
inequality.  This establishes \eqref{eq:expectationOfDualFeasibility}.

Now, for the average $\bar{\alf}$ defined in
\eqref{eq:averageDefinition} we have:
\begin{align}
\Exp[\calG(\bar{\alf})] &=
 \Exp\left[\calG\left(\sum_{t=T_0}^{T-1} \tfrac1{T-T_0} \vc{\alf}{t}\right)\right]
 \leq
  \tfrac1{T-T_0} \Exp\left[\sum_{t=T_0}^{T-1} \calG\left( \vc{\alf}{t}\right)\right]
\nonumber\\
\intertext{Applying Lemma \ref{lem:basicLemma} with
  $s=\frac{n}{b(T-T_0)}$:}
&\leq
  \frac {nb(T-T_0)}{n b} \frac{1}{T-T_0}
    \left( \Exp[\bD(\vc{\alf}{T})] -\Exp[\bD(\vc{\alf}{T_0})]
\right)
+
       \frac{G \beta_b n}{ 2 n b (T-T_0) \lambda }
\nonumber\\&
\leq
    \left( \bD(\alf^*) -\Exp[\bD(\vc{\alf}{T_0})]
\right)
+
       \frac{G \beta_b}{ 2 b(T-T_0) \lambda } \nonumber\\
&\overset{\eqref{eq:expectationOfDualFeasibility}}{\leq}
      \left( \frac{2 \beta_b G}{\lambda (2n+b(T_0-t_0))}
\right)
+       \frac{G \beta_b}{ 2 b(T-T_0)  \lambda }
\nonumber\\
\intertext{and if $T\geq \lceil\frac nb\rceil+T_0$ and $T_0\geq t_0$:}
&\leq
\frac{\beta_b G}{b \lambda}
\left(
    \frac{2 }{2\frac nb+ (T_0-t_0)}
+ \frac{1}{2(T-T_0)}
\right). \label{eq:rhsofbound}
\end{align}
Now, we can ensure the above is at most $\epsilon$ if we require:
\begin{align}
T_0 - t_0 &\geq \frac{\beta_b}{b} \left( \frac{4 G}{\lambda \epsilon} - 2\frac{n}{\beta_b}\right)\label{eq:T0}
\\
T-T_0 &\geq \frac{\beta_b}{b}
\frac{ G}{\lambda \epsilon_\calG}. \label{eq:T}
\end{align}

Combining the requirements \eqref{eq:t0}, \eqref{eq:T0} and
\eqref{eq:T} with $T\geq \lceil\frac nb\rceil+T_0$ and $T_0\geq t_0$,
and recalling that for the hinge loss $G=1$ and with $\alf^{(0)}={\bf 0}$ we
have $\epsilon^{(0)}_\bD = \bD(\alf^*)-\bD({\bf 0}) \leq 1-0=1$ gives the
requirements in Theorem~\ref{thm:dualityGapForLipFunctions}.
\end{proof}

\end{document}